\newcommand{\CC}[1][]{$\text{C\hspace{-.25ex}}^{_{_{_{++}}}}
\ifthenelse{\equal{#1}{}}{}{\text{\hspace{-.625ex}#1}}$}
\let\originalleft\left
\let\originalright\right
\renewcommand{\left}{\mathopen{}\mathclose\bgroup\originalleft}
\renewcommand{\right}{\aftergroup\egroup\originalright}
\theoremstyle{definition}
\newtheorem{theorem}{Theorem}
\newtheorem{lemma}[theorem]{Lemma}
\setlist[enumerate]{itemsep=0.2ex, topsep=0.5\topsep}
\setlist[description]{itemsep=0.2ex, topsep=0.5\topsep}
\setlist[itemize]{itemsep=0.2ex, topsep=0.5\topsep}
\def\thmt@refnamewithcomma #1#2#3,#4,#5\@nil{%
\@xa\def\csname\thmt@envname #1utorefname\endcsname{#3}%
\ifcsname #2refname\endcsname
\csname #2refname\expandafter\endcsname\expandafter{\thmt@envname}{#3}{#4}%
\fi
}
\newcommand{\mW}{\mathbf{W}}
\newcommand{\mb}{\mathbf{b}}
\newcommand{\mw}{\mathbf{w}}
\newcommand{\format}[1]{\mathtt{format{#1}}}
\newcommand{\tangh}{\mathsf{tanh}}
\newcommand{\lms}{\{\!\!\{}
\newcommand{\rms}{\}\!\!\}}
\newcommand{\balpha}{\boldsymbol \alpha}
\newcommand{\bsigma}{\boldsymbol \sigma}
\newcommand{\btheta}{\boldsymbol \theta}
\newcommand{\bTheta}{\boldsymbol \Theta}
\title{VC dimension of Graph Neural Networks with Pfaffian activation functions}
\author{Giuseppe Alessio D'Inverno, Monica Bianchini, Franco Scarselli}
\date{}
\affil{Department of Information Engineering and Mathematics \\ University of Siena\\ 
Via Roma 56, I--53100, Siena,  Italy}
\begin{document}
\maketitle
\begin{abstract}
Graph Neural Networks (GNNs) have emerged in recent years as a powerful tool to learn tasks across a wide range of graph domains in a data--driven fashion. Based on a message passing mechanism, GNNs have gained increasing popularity due to their intuitive formulation,
closely linked to the Weisfeiler--Lehman (WL) test for graph isomorphism, to which they were demonstrated to be equivalent\cite{morris2019weisfeiler, xu2018powerful}. From a theoretical point of view, GNNs have been shown to be universal approximators, and their generalization capability --- related to the Vapnik Chervonekis (VC) dimension \cite{scarselli2018vapnik} --- has recently been investigated for GNNs with piecewise polynomial activation functions \cite{morris2023wl}. 
The aim of our work is to extend this analysis on the VC dimension of GNNs to other commonly used activation functions, such as the sigmoid and hyperbolic tangent, using the framework of Pfaffian function theory. Bounds are provided with respect to the architecture parameters (depth, number of neurons, input size) as well as with respect to the number of colors resulting from the 1--WL test applied on the graph domain. The theoretical analysis is supported by a preliminary experimental study. 
\end{abstract}

\section{Introduction}\label{sec:introduction}

Since Deep Learning (DL) has become a fundamental tool in approaching real--life applications~\cite{rolnick2022tackling,fresca2020deep,lam2023learning,jumper2021highly}, the urgency of investigating its theoretical properties has become more evident. Neural networks were then progressively studied  analyzing, for example, their expressive power in terms of approximating classes of functions \cite{hornik1989multilayer,hornik1991approximation,hammer2000approximation,daubechies2022nonlinear} or showing their limitations in the imitation of neurocognitive tasks \cite{brugiapaglia2020generalizing,brugiapaglia2022invariance,d2023generalization}.
The \textit{generalization capability} of a learning model,  intended as the capacity of correctly performing a specific task on unseen data, 
has always been a core aspect to evaluate the effectiveness of proposed architectures \cite{jacot2018neural,neyshabur2018towards}. Several metrics and/or methods have been proposed over the years to evaluate such capability \cite{koltchinskii2001rademacher,haussler2018probably}.
Among them, the \textit{Vapnik Chervonenkis (VC) dimension} \cite{vapnik1968uniform} is a metric that measures the capacity of a learning model to \textit{shatter} a set of data points, which means that it can always realize a perfect classifier for any binary labeling of the input data.  Intuitively, the greater the VC dimension of the learning model, the more it will fit the data on which it has been trained. However, as it has been shown in \cite{vapnik2006estimation}, a large VC dimension leads to poor
generalization, i.e. to a large difference between the error evaluated on the training and on the test set. 
Therefore, it is important to establish the VC dimension of a model, especially with respect to its hyperparameters, in order to make it capable of generalizing on unseen data.

Graph Neural Networks (GNNs) \cite{scarselli2008graph, zhou2020graph} are machine learning
architectures capable of processing graphs that represent patterns (or part of patterns) along with their relationships. GNNs are among the most used deep learning models nowadays, given the impressive performance they have shown in tasks related to structured data \cite{liu2022introduction}.  A great effort has been dedicated to assess their expressive power, mainly related to the  study of the so--called Weisfeiler--Lehman (WL) test \cite{leman1968} and its variants \cite{morris2019weisfeiler,bodnar2021weisfeiler,bodnar2021weisfeilerb}.
Indeed, the standard WL algorithm, which checks whether two graphs are isomorphic by iteratively assigning colors to their nodes, has been proved to be equivalent to GNNs in terms of the capability of distinguishing graphs ~\cite{xu2018powerful}.  However, little is known about the generalization capabilities of GNNs. In \cite{scarselli2018vapnik},  bounds for the VC dimension have been provided for the original GNN model, namely the first model being introduced. Very recently \cite{morris2023wl}, bounds have been found also for a  large class of modern GNNs with piecewise polynomial activation functions. Nevertheless, message passing GNNs with other common activation functions, such as hyperbolic tangent, sigmoid and arctangent, still lack characterization in terms of VC dimension.

This work aims to fill this gap, providing new bounds for modern message passing GNNs 
with Pfaffian activation functions. \textit{Pfaffian functions} are a large class of differentiable maps,
which includes the above mentioned common activation functions, i.e, 
$\mathsf{tanh},\mathsf{logsig},\mathsf{atan}$, and, more generally, most of the function used in Engineering having continuous derivatives up to any order. Our main contributions are listed below.
\begin{itemize}
\item We provide upper bounds for message passing GNNs with Pfaffian activation functions with respect to the main hyperparameters, such as the feature dimension, the hidden feature size, the number of message passing layers implemented, and the total number of nodes in the entire training domain. To address this issue, we exploit theoretical results in the literature that link the theory of Pfaffian function with the characterization of the VC dimension of GNNs via topological analysis.
\item We also study the trend of the VC dimension w.r.t. the colors in the dataset obtained by running the WL test. Theoretical results suggest that the number of colors have an important effect on the GNN generalization capability. On the one hand, a large total number of colors in the training set improves generalization, since it increases the examples available for learning; on the other hand, a large number of colors in each graph raises the VC dimension and therefore increases the empirical risk value. 
\item Our theoretical findings are assessed by a preliminary experimental study; specifically, we evaluate the gap between the predictive performance on the training and test data. 
\end{itemize}

The manuscript is organized as follows. In Section \ref{sec:related_work}, we offer an overview of work related to the addressed topic. In Section \ref{sec:notation}, we introduce the main concepts and the notation used throughout the manuscript. In Section \ref{sec:main}, we state and discuss our main theoretical results. The preliminary experiments aimed at validating our theoretical results are described in Section \ref{sec:experiments}. Finally, in Section \ref{sec:discussion}, we draw some conclusions, also providing a brief discussion of open problems and future research directions.
\section{Related Work}\label{sec:related_work}
In this section we collect the main contributions present in the literature relating to the generalization ability of GNNs, the calculation of the VC dimension and the theory of Pfaffian functions.
\paragraph{Generalization bounds for GNNs ---}Several approaches have been exploited to give some insights on the generalization capabilities of GNNs. In \cite{garg2020generalization}, new bounds are provided on the \textit{Rademacher complexity} in binary classification tasks; the study is carried out by focusing on the computation trees of the nodes, which are tightly linked to the 1--WL test \cite{krebs2015universal} \cite{d2021new}. Similarly, in  \cite{esser2021learning}, generalization bounds for Graph Convolutional Networks (GCNs) are derived, based on the Trasductive Rademacher Complexity, which differs from the standard Rademacher Complexity by taking into account unobserved instances. In \cite{verma2019stability}, the stability, and consequently the generalization capabilities of GCNs, are proved to be dependent on the largest eigenvalue of the convolutional filter; therefore, to ensure a better generalization, such eigenvalue should be independent of the graph size. Under the lens of the PAC--learnability framework, the generalization bounds reported in \cite{garg2020generalization} have been improved in \cite{liao2020pac}, showing a tighter dependency on the maximum node degree and the spectral norm of the weights. This result aligns with the findings in \cite{verma2019stability}. In \cite{ju2023generalization}, 
sharper bounds on the GNN stability to noise are provided by investigating the correlation between attention and generalization. Specifically, GCNs and Graph Isomorphism Networks (GINs) are considered. The results show a link between the trace of the Hessian of the weight matrices and the stability of GNNs. 
A correlation between attention and generalization in GCNs and GINs is empirically investigated also in \cite{knyazev2019understanding}.

\paragraph{VC dimension ---}
Since it was first introduced in \cite{vapnik1968uniform}, the VC dimension has become a widespread metric to assess the generalization capabilities of neural networks. In \cite{vapnik1994measuring}, the VC dimension is proven to be tightly related to how the test error correlates, in probability, with the training error. Bounds on the VC dimension have been evaluated for many baseline architectures, such as Multi Layer Perceptrons (MLPs) \cite{sontag1998vc} \cite{bartlett2003vapnik}, Recurrent Neural Networks (RNNs) \cite{koiran1998vapnik} and Recursive Neural Networks \cite{scarselli2018vapnik}. In \cite{scarselli2018vapnik}, bounds on the VC dimension of the earliest GNN model with Pfaffian activation function are provided as well, while, in 
\cite{esser2021learning}, GCNs with linear and ReLU activation functions are considered. Our contribution extends such results to generic GNNs described by Eq.~\eqref{def:gnn_upd} and is
particularly related to the work in \cite{morris2023wl}, where bounds for the VC dimension of modern GNNs are studied, when the activation function is a piecewise linear polynomial function. Bounds are derived also in terms of the number of colors computed by the 1--WL test on the graph domain. However, aside from \cite{scarselli2018vapnik}, all the aforementioned works focus solely on specific GNN models with piecewise polynomials activation functions, not considering common activation functions as arctangent, hyperbolic tangent or sigmoid. 
\paragraph{Pfaffian functions ---}
Pfaffian functions have been first introduced in \cite{khovanskiui1991fewnomials} to extend Bezout's classic theorem, which states that the number of complex solutions of a set of polynomial equations can be estimated based on their degree. 
The theory of Pfaffian functions has been exploited initially in \cite{karpinski1997polynomial} to characterize the bounds of the VC dimension of neural networks. Similarly, in  \cite{scarselli2018vapnik}, the same approach is used to provide the aforementioned bounds. Pfaffian functions have also proven useful for providing insights into the topological complexity of neural networks and the impact of their depth \cite{bianchini2014complexity}.

\section{Notation and basic concepts}\label{sec:notation}
In this section we introduce the notation used throughout the paper and the main basic concepts necessary to understand its content.

\paragraph{Graphs ---}An \textit{unattributed graph} $G$ can be defined as a pair $ (V,E) $,
where $V$ is the (finite) set of \textit{nodes} 
and $E \subseteq V \times V$ is the set of \textit{egdes} between nodes. A graph can be defined by its \textit{adjacency matrix} $\mathbf{A}$, where $A_{ij}= 1$ if $e_{ij} = (i,j) \in E$, otherwise $A_{ij}= 0$. The \textit{neighborhood} of a node $v$ is represented by $\mathsf{ne}(v) = \{ u \in V | (u,v) \in E \}$. A graph $G$ is said to be \textit{undirected} if it is assumed that $(v,u)=(u,v)$ (and therefore its adjacency matrix is symmetric), \textit{directed} otherwise. 
A graph is said to be \textit{node--attributed} or \textit{labeled} if there exists a map $\balpha: V \rightarrow \mathbb{R}^q$ that assigns to every $v\in V$ a \textit{node attribute} (or \textit{label}) $\balpha(v) \in \mathbb{R}^q$. In this case, the graph can be defined as a triple $(V,E,\balpha)$.

\paragraph{The 1--WL test ---}The \textit{1st order Weisfeiler--Lehman test} (briefly, the \textit{1--WL test}) is a test for graph isomorphism, based on the so--called \textit{color refinement} procedure. Given two graphs $G_1 = (V_1, E_1)$ and $G_2 = (V_2, E_2)$, in a finite graph domain $\mathcal{G}$, we perform the following steps.
\begin{itemize}
    \item At initialisation, we assign a color $c^{(0)}(v)$  to each node $v \in V_1 \cup V_2$. 
    Formally, in the case of attributed graphs, we can define the color initialisation as 
\begin{equation*}
    c^{(0)} (v)= \mathsf{HASH}_0 (\balpha(v)),
\end{equation*}
where $\mathsf{HASH}_0: \mathbb{R}^q \rightarrow \Sigma$ is a function that codes bijectively node attributes to colors. In case of unattributed graphs, the initialisation is uniform,
and each node $v$  gets the same color $c^{(0)} (v)$.
\item For $t>0$, we update the color of each node in parallel on each graph by the  following  updating scheme 
\begin{equation*}
    c_v ^{(t)}=\mathsf{HASH}(c_v^{(t-1)},\lms c_u^{(t-1)}| u \in \mathsf{ne}[v] \rms ), \; \; \forall v \in V_1 \cup V_2\,,
\end{equation*}
where $\mathsf{HASH}: \Sigma \times \Sigma^* \rightarrow \Sigma$ is a function mapping bijectively a pair (color, color multiset) to a single color.
\end{itemize}

To test whether the two graphs $G_1$ and $G_2$ are isomorphic or not,
the set of colors of the nodes of $G_1$ and $G_2$ are compared step by step; 
if there exist an iteration $t$ such that the colors are different, namely
$c^{(t)}_{G_1}: = \lms c^{(t)} (v) \;|\; v \in V_1   \rms$  is different from $c^{(t)}_{G_2}$, the graphs are declared as non--isomorphic. When no difference is detected, the procedure halts as soon as the node 
partition defined by the colors becomes stable. It has been proven~\cite{kiefer2020weisfeiler} that
$|V|-1$ iterations are sufficient, and sometimes necessary, to complete the procedure.
Moreover, the color refinement procedure can be used also to test whether
two nodes are isomorphic or not. Intuitively, two nodes are  isomorphic if their neighborhoods (of any order)
are equal; such an isomorphism can be tested by comparing the node colors at 
any step of the 1--WL test. In \cite{krebs2015universal}, it has been proven that for node isomorphism up to $2\max(|V_1|,|V_2|)-1$ refinement steps may be required. 

We would like to mention two important results that demonstrate the equivalence between GNNs and the 1--WL test in terms of their expressive power. The first result was established in \cite{xu2018powerful} and characterizes the equivalence of GNNs and the 1--WL test on a graph--level task. This equivalence is based on GNNs with generic message passing layers that satisfy certain conditions.
Another characterization is due to \cite{morris2019weisfeiler} and states the equivalence on a node coloring level, referring to the particular model defined by Eq.~\eqref{eq:morrisGNN}.

\paragraph{Graph Neural Networks (GNNs) ---}Graph Neural Networks  are a class of machine learning models suitable for processing structured data in the form of graphs. At a high level, we can formalize a GNN as a function $\mathbf{g}: \mathcal{G} \rightarrow \mathbb{R}^r$, where $\mathcal{G}$ is a set of node--attributed graphs and $r$ is the dimension of the output, which depends on the type of task to be carried out; in our setting, we will assume that $r=1$.
Intuitively, a GNN learns how to represent the nodes of a graph by vectorial representations,  called \textit{hidden features}, giving an encoding of the information stored in the graph
The hidden feature $\mathbf{h}_v$ of a node $v$ is, at the begining set equal to node attributes,
i.e., $\mathbf{h}_v^{(0)}=\balpha(v)$. Then, the features are updated according to the following schema  
\begin{equation}\label{def:gnn_upd}
    \mathbf{h}_v^{(t+1)} = \mathsf{COMBINE}^{(t+1)}\big(  \mathbf{h}_v^{(t)}, \mathsf{AGGREGATE}^{(t+1)} (\lms \mathbf{h}_u^{(t)} | u \in \mathsf{ne}(v) \rms)   \big), 
\end{equation}
for all $v\in V$ and $t =0 , \dots L-1 $, where $\mathbf{h}_v^{(t)}$ is the hidden feature of node $v$ at time $t$, $L$ is the number of layers of the GNN and $\lms \cdot \rms $ denotes a multiset.
Here $\{ \mathsf{COMBINE}^{(t)} \}_{t=1,\dots, L}$ and $\{ \mathsf{AGGREGATE}^{(t)} \}_{t=1,\dots, L}$ are functions that can be defined by learning from examples. Popular GNN models like GraphSAGE \cite{hamilton2017inductive}, GCNs \cite{kipf2016semi}, and Graph Isomorphism Networks \cite{xu2018powerful} are based on this updating scheme. 
The output $o$ is produced by a $\mathsf{READOUT}$ function, which, in graph--focused tasks,
takes in input the features of all the nodes, i.e. $o=\mathsf{READOUT}(\lms \mathbf{h}_u^{(L)}| u\in V\rms )$,
while, in node--focused tasks, is calculated on each node, i.e., $o_v=\mathsf{READOUT}( \mathbf{h}_v^{(L)} ), \, \forall v \in V$.

For simplicity, in the following we will assume that $\mathsf{COMBINE}^{(1)}$ has $p_{\mathsf{comb}^{(1)}}$ parameters and for every $t=2, \dots, L$ the number of parameters of  $\mathsf{COMBINE}^{(t)} $ is the same, and we denote it as $p_{\mathsf{comb}}$. The same holds for $\mathsf{AGGREGATE}^{(t)} $ and $\mathsf{READOUT}$, with the number of parameters denoted respectively as $p_{\mathsf{agg}}$ and $p_{\mathsf{read}}$. Thus, the total number of parameters in a GNN defined as in Eq.~\eqref{def:gnn_upd} is $\Bar{p} = p_{\mathsf{comb}^{(1)}} + p_{\mathsf{agg}^{(1)}} + (L-1)(p_{\mathsf{comb}} + p_{\mathsf{agg}}) + p_{\mathsf{read}}$.

For our analysis, following \cite{morris2019weisfeiler}, we also consider a simpler computational framework, which
has been proven to match the expressive power of the Weisfeiler--Lehman test \cite{morris2019weisfeiler}, and
is general enough to be similar to many GNN models. In such a framework, the hidden feature $\mathbf{h}_v^{(t+1)} \in \mathbb{R}^d$ at the message passing iteration $t+1$
is defined as
\begin{equation}\label{eq:morrisGNN}
    \mathbf{h}_v^{(t+1)} = \bsigma \big(\mW^{(t+1)}_{\text{comb}} \mathbf{h}^{(t)}_v + \mW^{(t+1)}_{\text{agg}}  \mathbf{h}^{(t)}_{\mathsf{ne}(v)} + \mb^{(t+1)} \big ), 
\end{equation}
where $\mathbf{h}^{(t)}_{\mathsf{ne}(v)} = \mathsf{POOL} \big (\lms \mathbf{h}^{(t)}_u | u \in \mathsf{ne}(v) \rms \big )$, $\bsigma: \mathbb{R}^d \rightarrow \mathbb{R}^d$ is an element--wise activation function, and  $\mathsf{POOL}$ is the aggregating operator on the features of neighboring nodes, 
\begin{equation*}
    \mathsf{POOL} \big ( \lms \mathbf{h}^{(t)}_u | u \in \mathsf{ne}(v) \rms \big )  = \sum\limits_{u\in \mathsf{ne}(v)}\mathbf{h}_u^{(t)}.
\end{equation*}
\noindent
With respect to Eq.~\eqref{def:gnn_upd}, we have that $\mathsf{AGGREGATE}^{(t)}(\cdot) = \mathsf{POOL}(\cdot)$ $\forall t=1, \dots, L$, while $\mathsf{COMBINE}^{(t+1)}(\mathbf{h}_v, \mathbf{h}_{\mathsf{ne}(v)}) = \bsigma \big(\mW^{(t+1)}_{\text{comb}} \mathbf{h}_v + \mW^{(t+1)}_{\text{agg}}  \mathbf{h}_{\mathsf{ne}(v)} + \mb^{(t+1)} \big ). $
\noindent
In this case, the $\mathsf{READOUT}$ function for graph classification tasks can be defined as
\begin{equation}\label{eq:morrisREADOUT}
    \mathsf{READOUT}\Big( \lms \mathbf{h}_v^{(L)} \; | \; v \in V  \rms \Big):=  f \Big( \sum_{v\in V} \mw \mathbf{h}_{v}^{(L)} + b \Big).
\end{equation}
For each node, the hidden state is initialized as $\mathbf{h}_v^{(0)} = \balpha (v) \in \mathbb{R}^q$. 
The learnable parameters of the GNN can be summarized as $$\bTheta := (\mW^{(1)}_{\text{comb}}, \mW^{(1)}_{\text{agg}}, \mb^{(1)}, \mW^{(2)}_{\text{comb}}, \mW^{(2)}_{\text{agg}}, \mb^{(2)}, \dots,  \mW^{(L)}_{\text{comb}}, \mW^{(L)}_{\text{agg}}, \mb^{(L)}, \mw, b),$$ with $\mW^{(1)}_{\text{comb}}, \mW^{(1)}_{\text{agg}} \in \mathbb{R}^{d \times q}$, $\mW^{(t)}_{\text{comb}}, \mW^{(t)}_{\text{agg}} \in \mathbb{R}^{d\times d}$, for $t=2, \dots, L$, $\mb^{(t)} \in \mathbb{R}^{d \times 1}$, for $t=2, \dots, L$, $\mw \in \mathbb{R}^{1 \times d}$, and $b \in \mathbb{R}$.

\paragraph{VC dimension ---} The VC dimension is a measure of
complexity of a set of hypotheses,
which can be used to bound the empirical error of machine learning models.
Formally, a binary classifier $\mathcal{L}$ with parameters $\btheta$ is said to \textit{shatter} a set of patterns $\{\mathbf{x}_1, \dots, \mathbf{x}_n\} \subseteq \mathbb{R}^q$ if, for any binary labeling of the examples $\{y_i\}_{i=1,\dots,n}$, $y_i \in \{0,1\}$, there exists $\btheta$ s.t. the model $\mathcal{L}$ correctly classifies all the patterns, i.e. $\sum \limits_{i=0}^n |\mathcal{L}(\btheta,\mathbf{x}_i)-y_i| = 0$. 
The \textit{VC dimension} of the model $\mathcal{L}$ is the dimension of the largest set that $\mathcal{L}$ can shatter.

The VC dimension is linked with the generalization capability of machine learning models.
Actually, given 
a training and a test set for the classifier  $\mathcal{L}$, whose patterns are i.i.d. samples extracted
from the same distribution, the VC dimension allows to compute a bound, in probability, for the difference between the training and test error
\cite{vapnik1971uniform}:
\begin{equation*}\label{eq:diff_bound}
    \text{Pr}\bigg( \text{E}_{\text{test}} \leq  \text{E}_{\text{training}} + \sqrt{\frac{1}{N}\bigg [ \mathsf{VCdim} \bigg(\log \bigg(\frac{2N}{\mathsf{VCdim}}\bigg)+1\bigg) - \log \bigg(\frac{\eta}{4}\bigg) \bigg]}\bigg) = 1-\eta
\end{equation*}
for any $\eta>0$, where $\text{E}_{\text{test}}$ is the test error, $\text{E}_{\text{training}}$ is the training error, $N$ is the size of the training dataset and $\mathsf{VCdim}$ is the VC dimension of $\mathcal{L}$. 

\paragraph{Pfaffian Functions ---}A Pfaffian chain of order $\ell\geq 0$ and degree $\alpha\geq 1$, in an open domain $U\subseteq\mathbb{R}^n$, is a sequence of  analytic functions $f_1,f_2,\ldots, f_\ell$ over $U$, satisfying the differential equations
$$
\textit{d}f_j(\mathbf{x})=\sum_{1\leq i\leq n} g_{ij}(\mathbf{x},f_1(\mathbf{x}),\ldots,f_j(\mathbf{x}))\textit{d}x_i, \,\,\, 1\leq j \leq\ell.
$$
Here,  $g_{ij}(\mathbf{x},y_1,\ldots,y_j)$ are polynomials in $\mathbf{x}\in U$ and $y_1,\ldots,y_j\in \mathbb{R}$ of degree not exceeding $\alpha$. A function
$f(\mathbf{x})=P(\mathbf{x},f_1(\mathbf{x}),\ldots,f_\ell(\mathbf{x}))$, where $P(\mathbf{x},y_1,\ldots,y_\ell)$ is a polynomial of degree not exceeding $\beta$, is called a \textit{Pfaffian function of format} $(\alpha,\beta,\ell)$.

Pfaffian maps are a large class of functions that include most of the functions with continuous derivatives used in practical applications \cite{khovanskiui1991fewnomials}. In particular, the arctangent $\mathsf{atan}$, the logistic sigmoid $\mathsf{logsig}$ and the hyperbolic tangent $\mathsf{tanh}$ are  Pfaffian functions, with format  $\format{(\mathsf{atan})}= (3,1,2)$, $\format{(\mathsf{logsig})}= (2,1,1)$, and  $\format{(\tangh)}= (2,1,1)$, respectively.

\section{Theoretical results}\label{sec:main}

In this section we report the main results  on the VC dimension of  GNNs with Pfaffian activation functions. The proofs can be found in Appendix \ref{Appendix:proofs}.

\subsection{Bounds based on the network hyperparameters} 
Our main result provides a bound on the VC dimension of  GNNs in which
$\mathsf{COMBINE}^{(t)}$, $\mathsf{AGGREGATE}^{(t)}$ and $\mathsf{READOUT}$ are Pfaffian functions. More precisely, we consider  a slightly more general version of the GNN model in Eq.~(\ref{def:gnn_upd}),  where the updating scheme is
\begin{equation}\label{def:gnn_upd_g}
    \mathbf{h}_v^{(t+1)} = \mathsf{COMBINE}^{(t+1)}\big(  \mathbf{h}_v^{(t)}, \mathsf{AGGREGATE}^{(t+1)} (\lms \mathbf{h}_u^{(t)} | u \in V \rms, A_v)   \big)\,, 
\end{equation}
and $A_v$ is the $v$--th column of the connectivity matrix, which represents 
the neighborhood of $v$. The advantage of the model in 
Eq.~(\ref{def:gnn_upd_g}) is that it makes explicit the dependence of $\mathsf{AGGREGATE}^{(t)}$  on the graph connectivity. Actually, here we want to underline what the inputs of
$\mathsf{AGGREGATE}^{(t)}$ are to clarify and make formally 
precise the assumptions that those functions are Pfaffian and have a given format.

Our result provides a bound on the VC dimension  w.r.t. the total number $\bar{p}$ of parameters, the number of  computation units $H$,  the number of layers $L$, the feature dimension $d$, the maximun number $N$ of nodes in a graph, and the attribute dimension $q$. Here, we assume that GNN computation units include the neurons
computing the hidden features of each node and the outputs. Therefore, there is  a computation unit for each component of a feature, each layer, each node of the input graph  
and a further computation unit for the $\mathsf{READOUT}$.

\begin{theorem} \label{th:main_general}
Let us consider the GNN model described by Eq.~\eqref{def:gnn_upd}. If  $\mathsf{COMBINE}^{(t)}$, $\mathsf{AGGREGATE}^{(t)}$ and $\mathsf{READOUT}$ are Pfaffian functions with format $(\alpha_{\mathsf{comb}}, \beta_{\mathsf{comb}}, \ell_{\mathsf{comb}})$, $(\alpha_{\mathsf{agg}}, \beta_{\mathsf{agg}}, \ell_{\mathsf{agg}})$, $(\alpha_{\mathsf{read}}, \beta_{\mathsf{read}}, \ell_{\mathsf{read}})$, respectively, then
the VC dimension satisfies
\begin{equation}
\label{VCGNN}
\mathsf{VCdim}\bigl(\mathsf{GNN}\bigr) \leq  2\log B + \Bar{p}(16 +2 \log \bar{s} )
\end{equation}
  where $B\leq  2^{\frac{\bar{\ell}(\bar{\ell}-1)}{2}+1}(\Bar{\alpha} + 2 \Bar{\beta} -1)^{\Bar{p}-1} ((2 \Bar{p}-1)(\Bar{\alpha} + \Bar{\beta})- 2\Bar{p}+2 )^{\Bar{\ell}}$, $\Bar{\alpha}=\max \{ \alpha_{\mathsf{agg}} + \beta_{\mathsf{agg}} -1+\alpha_{\mathsf{comb}} \beta_{\mathsf{agg}},  \alpha_{\mathsf{read}}\}$, $\bar{\beta}=\max \{ \beta_{\mathsf{comb}}, \beta_{\mathsf{read}}\}$,  \\$\bar{p}=  p_{\mathsf{comb}^{(0)}} + p_{\mathsf{agg}^{(0)}} + (L-1)(p_{\mathsf{comb}} + p_{\mathsf{agg}}) + p_{\mathsf{read}} $,   $\bar{\ell}=\Bar{p}H$,  $H = LNd(\ell_{\mathsf{comb}} + \ell_{\mathsf{agg}} )+ \ell_{\mathsf{read}}$ and $\bar{s}=LNd+Nq+1$ hold. 
By substituting  the definitions in Eq.~\eqref{VCGNN}, we obtain
\small
 \begin{align}
\mathsf{VCdim}\bigl(\mathsf{GNN}\bigr) &\leq   \bar{p}^2 (LNd(\ell_{\mathsf{comb}} + \ell_{\mathsf{agg}} )+ \ell_{\mathsf{read}})^2 \nonumber \\
   &+ 2\bar{p} \log \left(3\gamma \right) \nonumber \\
  &+ 2\bar{p}\log\left((4\gamma \text{\hspace{-0.1cm}}-\text{\hspace{-0.1cm}}2)\bar{p})\text{\hspace{-0.1cm}}+\text{\hspace{-0.1cm}}2
  \text{\hspace{-0.1cm}}-\text{\hspace{-0.1cm}}2\gamma\right) \nonumber \\
  &+ \bar{p}(16 +2 \log(LNd+Nq+1 )) \label{VCGNN_ext}
  \end{align}
\normalsize
where $\Bar{\alpha}, \Bar{\beta} \leq \gamma$ for a constant $\gamma \in \mathbb{R}$. 
\end{theorem}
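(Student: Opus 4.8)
The plan is to proceed in three stages: first rewrite the scalar GNN output as a single Pfaffian function of the parameter vector $\bTheta\in\mathbb{R}^{\bar p}$ and read off its format; then invoke the Pfaffian VC--dimension machinery of Karpinski--Macintyre~\cite{karpinski1997polynomial}, already specialised to GNNs in~\cite{scarselli2018vapnik}, to reduce the shattering number to a count of connected components of a Pfaffian variety in parameter space; and finally bound that count through Khovanskii's theorem~\cite{khovanskiui1991fewnomials} and solve the resulting inequality for the sample size. To carry out the first stage I would fix a graph on at most $N$ nodes and unfold the recursion in Eq.~\eqref{def:gnn_upd_g}: the feature of every node, at every layer and in every coordinate, is the composition of $\mathsf{AGGREGATE}^{(t)}$ with $\mathsf{COMBINE}^{(t)}$, closed off by $\mathsf{READOUT}$, and since Pfaffian functions are closed under composition the output is Pfaffian in $\bTheta$. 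The stated constants are exactly the output of this bookkeeping: concatenating the chains of the $LNd$ hidden units (each contributing $\ell_{\mathsf{comb}}+\ell_{\mathsf{agg}}$ chain functions) with the $\ell_{\mathsf{read}}$ functions of the readout gives $H=LNd(\ell_{\mathsf{comb}}+\ell_{\mathsf{agg}})+\ell_{\mathsf{read}}$, while propagating the degree of one $\mathsf{COMBINE}\circ\mathsf{AGGREGATE}$ block through the composition rule for the format $(\alpha,\beta,\ell)$ and taking the worst case against the readout yields $\bar\alpha=\max\{\alpha_{\mathsf{agg}}+\beta_{\mathsf{agg}}-1+\alpha_{\mathsf{comb}}\beta_{\mathsf{agg}},\,\alpha_{\mathsf{read}}\}$ and $\bar\beta=\max\{\beta_{\mathsf{comb}},\beta_{\mathsf{read}}\}$.

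For the second stage I would argue as usual that shattering $m$ graphs forces $\bTheta\mapsto(\sign g_1(\bTheta),\dots,\sign g_m(\bTheta))$ to realise all $2^m$ sign patterns, so $2^m$ is bounded by the number of cells into which the zero sets of the outputs partition $\mathbb{R}^{\bar p}$. A vertex of this arrangement is a common zero of $\bar p$ of the output functions; each such function carries a length--$H$ chain, so the relevant system consists of $\bar p$ Pfaffian equations in $\bar p$ unknowns with a concatenated chain of length $\bar\ell=\bar p H$, and Khovanskii's bound on its connected components is precisely the quantity $B$ in the statement. Summing over the at most polynomially--many choices of which $\bar p$ outputs vanish contributes a factor polynomial in $m$ with exponent $\bar p$ and base controlled by $\bar s=LNd+Nq+1$, so that $2^m\le \mathrm{poly}_{\bar p}(m)\cdot B$. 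Substituting the format from the first stage gives Eq.~\eqref{VCGNN}, and solving this inequality --- the dominant term being $2\log B\approx 2\cdot\tfrac{\bar\ell(\bar\ell-1)}{2}=(\bar p H)^2-\bar p H$, with the combinatorial factor producing the additive $\bar p(16+2\log\bar s)$ --- yields the explicit form in Eq.~\eqref{VCGNN_ext} after the crude estimate $\bar\alpha,\bar\beta\le\gamma$.

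The step I expect to be the main obstacle is the first one: propagating the Pfaffian format correctly through the recursive message--passing layers. Each layer composes an aggregation with a combination, and the degrees interact multiplicatively --- this is the source of the term $\alpha_{\mathsf{comb}}\beta_{\mathsf{agg}}$ in $\bar\alpha$ --- so doing this over $L$ layers, $N$ nodes and $d$ coordinates without double--counting the shared chain functions, and then reconciling the per--evaluation length $H$ with the $\bar p$--fold length $\bar\ell=\bar p H$ demanded by the vertex argument, is the delicate accounting on which the whole bound rests. By comparison, the appeal to Khovanskii's theorem and the final algebraic simplification leading to Eq.~\eqref{VCGNN_ext} are routine once the format has been pinned down.
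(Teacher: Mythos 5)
Your overall architecture --- express the GNN computation via Pfaffian functions, bound the number of connected components of the associated variety, then feed that bound into the Karpinski--Macintyre sign-pattern/shattering argument to get $\mathsf{VCdim}\leq 2\log B+\bar p(16+2\log\bar s)$ --- is the same as the paper's, and your stage two is essentially a sketch of what the paper imports wholesale as Theorem~\ref{th:karpinsky97} (with the component bound taken from Gabrielov--Vorobjov rather than Khovanskii, a minor attribution slip since the explicit constant $B$ is the Gabrielov--Vorobjov formula). The identification $\bar\ell=\bar p H$ and the chain-length count $H=LNd(\ell_{\mathsf{comb}}+\ell_{\mathsf{agg}})+\ell_{\mathsf{read}}$ are also right.

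However, there is a genuine gap in your first stage, and it sits exactly where you predicted the difficulty would be. You propose to \emph{unfold the recursion} and write the scalar output as a single Pfaffian function of $\bTheta$ obtained by composing all $L$ layers. Under the composition rule you yourself quote --- $f\circ g$ has format $(\alpha_g+\beta_g-1+\alpha_f\beta_g,\,\beta_f,\,\ell_f+\ell_g)$ --- the degree parameter $\alpha$ does \emph{not} stabilise at the one-block value when you iterate: composing layer $2$ onto layer $1$ already yields $\alpha=\alpha_1+\beta_1-1+\alpha_2\beta_1$, and after $L$ layers the accumulated $\alpha$ grows with $L$. A full unfolding therefore cannot produce the stated $\bar\alpha=\max\{\alpha_{\mathsf{agg}}+\beta_{\mathsf{agg}}-1+\alpha_{\mathsf{comb}}\beta_{\mathsf{agg}},\,\alpha_{\mathsf{read}}\}$, which involves only a \emph{single} $\mathsf{COMBINE}\circ\mathsf{AGGREGATE}$ block and is independent of $L$, $N$ and $d$. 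The paper avoids this by the Karpinski--Macintyre device of introducing a fresh variable $\mathbf h_v^{(t)}$ for every computation unit and describing the network not as one deep composition but as a \emph{system} of $LNd+Nq+1$ shallow equations $\mathbf h_v^{(t+1)}-\mathsf{COMBINE}^{(t+1)}(\cdot)=0$, each only one layer deep; the system's degrees are then the maximum over the individual (one-block) formats, and only the Pfaffian chains concatenate, which is why depth enters solely through the chain length $H$ and not through $\bar\alpha,\bar\beta$. Without this reformulation your bookkeeping does not close, so the missing idea is precisely the auxiliary-variable/system-of-equations representation (Eqs.~\eqref{eq:tau1gen}--\eqref{eq:tau3gen} and Lemma~\ref{general:main_lemma} in the paper).
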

  
By inspecting the bound, we observe that the dominant term is $\bar{p}^2H^2 = \bar{p}^2(LNd(\ell_{\mathsf{comb}} + \ell_{\mathsf{agg}} )+ \ell_{\mathsf{read}})^2 $. Thus, Theorem \ref{th:main_general} suggests that the VC dimension is 
$O(\bar{p}^2L^2N^2d^2)$, w.r.t.  the  number of parameters $\bar{p}$ of the GNN, the number
of layers $L$, the number $N$ of  graph nodes, and the feature dimension $d$. Notice that those hyperparameters are related by constraints, which should be considered in order to understand how the VC dimension depends on each of them. Therefore, the VC dimension is at most $O(p^4)$ since, as the number $p$ of parameters grows, the number of layers $L$ and/or the feature dimension $d$ also increases. 

Interestingly, such a result is similar to those already obtained for feedforward and recurrent neural networks with Pfaffian activation functions.  Table~\ref{VCDcomparison} compares our result
with those available in the literature, highlighting that, even if GNNs have a more complex structure, the growth rate of the VC dimension, depending on the hyperparameters, is the same as the simpler models.

The following theorem provides more details and clarifies how the VC dimension depends on each hyperparameter.

\begin{theorem} \label{th:order_Pfaff_general}
Let $\mathsf{COMBINE}^{(t)}$, $\mathsf{AGGREGATE}^{(t)}$ and $\mathsf{READOUT}$ be the Pfaffian functions defined in Theorem \ref{th:main_general}. If $p_{\mathsf{comb}}, p_{\mathsf{aggr}}, p_{\mathsf{read}} \in \mathcal{O}(d)$, then the VC dimension of a GNN defined as in Eq.~\eqref{def:gnn_upd},  w.r.t. $\bar{p},N,L,d,q$ satisfies
\begin{align*}
	    \mathsf{VCdim}\bigl(\mathsf{GNN}\bigr)&\leq \mathcal{O}(\bar{p}^4 )\\
		\mathsf{VCdim}\bigl(\mathsf{GNN}\bigr)&\leq \mathcal{O}(N^2 )\\
		\mathsf{VCdim}\bigl(\mathsf{GNN}\bigr)&\leq \mathcal{O}(L^4 )\\
		\mathsf{VCdim}\bigl(\mathsf{GNN}\bigr)&\leq \mathcal{O}(d^6 )\\
		\mathsf{VCdim}\bigl(\mathsf{GNN}\bigr)&\leq \mathcal{O}(q^2 ) 
\end{align*}
\qed
\end{theorem}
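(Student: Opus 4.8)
The plan is to obtain \Cref{th:order_Pfaff_general} as a specialization of the explicit estimate of \Cref{th:main_general}, reading each of the five asymptotic statements off the leading term of \eqref{VCGNN_ext}. The first step is to separate the quantities that depend only on the activation function from the genuinely architectural ones: the Pfaffian formats $(\alpha_{\mathsf{comb}},\beta_{\mathsf{comb}},\ell_{\mathsf{comb}})$, $(\alpha_{\mathsf{agg}},\beta_{\mathsf{agg}},\ell_{\mathsf{agg}})$, $(\alpha_{\mathsf{read}},\beta_{\mathsf{read}},\ell_{\mathsf{read}})$, and hence the constant $\gamma$ bounding $\bar\alpha,\bar\beta$, are regarded as fixed constants, whereas $\bar p, N, L, d, q$ are the variables. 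Under this convention the dominant contribution in \eqref{VCGNN_ext} is $\bar p^{2}H^{2}$ with $H=LNd(\ell_{\mathsf{comb}}+\ell_{\mathsf{agg}})+\ell_{\mathsf{read}}=\mathcal{O}(LNd)$, while every remaining summand is of the form $\bar p$ times a polylogarithm of the hyperparameters. Thus it suffices to bound $\bar p^{2}H^{2}$ as a function of each single variable and to check at the end that the residual terms are asymptotically dominated.

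The core step is to make explicit the structural relations that tie the hyperparameters together, since they are not independent: $\bar p$ is itself a function of $L, d$ and $q$. From the layer structure of \eqref{def:gnn_upd} together with the fact that the internal blocks of \eqref{eq:morrisGNN} carry $d\times d$ weight matrices while the first layer acts on the $q$-dimensional attributes, one records the relation $\bar p=\mathcal{O}(Ld^{2}+dq)$. Substituting this and $H=\mathcal{O}(LNd)$ into $\bar p^{2}H^{2}$ and isolating one variable at a time then yields the five bounds. Fixing $N,q$ gives $H=\mathcal{O}(\bar p)$ and hence $\mathcal{O}(\bar p^{4})$; fixing $\bar p,L,d,q$ leaves $H=\mathcal{O}(N)$ and gives $\mathcal{O}(N^{2})$; fixing $N,d,q$ gives $\bar p=\mathcal{O}(L)$ and $H=\mathcal{O}(L)$, whence $\mathcal{O}(L^{4})$; fixing $N,L,q$ makes the quadratic growth of $\bar p$ in $d$ combine with the linear growth of $H$ to produce $\mathcal{O}(d^{6})$; and fixing $N,L,d$ leaves only the first-layer count $\mathcal{O}(dq)$ varying, so that $\bar p=\mathcal{O}(q)$ while $H=\mathcal{O}(1)$, giving $\mathcal{O}(q^{2})$.

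The main obstacle is precisely this bookkeeping of the interdependence among $\bar p, L, d$ and $q$: because $\bar p$ absorbs factors of $L$ and $d$, one cannot treat it as an independent symbol when analyzing growth in $d$ or $L$, and the nontrivial exponents---most notably the $d^{6}$---arise only by correctly tracking the feature dimension $d$ where it appears \emph{simultaneously} inside $\bar p$ (quadratically, through the weight matrices) and inside $H$ (linearly). I would therefore carry out the substitutions with particular care in the two delicate cases, $d$ and $L$, and verify in each case that the lower-order contributions, all of the form $\bar p$ times a logarithm of $LNd+Nq+1$ and of $\gamma$, are dominated by the corresponding leading power; this is immediate since such terms are at most $\mathcal{O}(\bar p\,\mathrm{polylog})$ in every variable. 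No machinery beyond \Cref{th:main_general} and elementary asymptotic analysis is required.
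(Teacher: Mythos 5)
Your proposal is correct and takes essentially the same approach as the paper, whose entire proof of this theorem is the single remark that the orders of growth are ``straightforwardly derived by inspecting'' Eq.~\eqref{VCGNN_ext}; you simply supply the bookkeeping the paper leaves implicit (dominant term $\bar p^{2}H^{2}$, $H=\mathcal{O}(LNd)$, and the dependence of $\bar p$ on $L$, $d$, $q$). The only point worth flagging is that the exponent $6$ in the $d$-bound really does require $\bar p=\mathcal{O}(Ld^{2}+dq)$ as you assume, which sits slightly oddly with the theorem's stated hypothesis $p_{\mathsf{comb}},p_{\mathsf{agg}},p_{\mathsf{read}}\in\mathcal{O}(d)$ --- a literal reading of that hypothesis would give only $\mathcal{O}(d^{4})$, which is still consistent with (indeed stronger than) the claimed upper bound.
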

\begin{table}[!tbp]
\centerline{
\begin{tabular}{|@{\,}l@{\,}|c|@{}c@{}|}\hline
{\bf Activation function} &  {\bf Bound} & {\bf References}  \\ \hline\hline
\multicolumn{3}{|c|}{Modern GNNs} \\ \hline
Piecewise polynomial   & 
$O(p \log(\mathcal{C}p ) + p \log(N))$ & \cite{morris2023wl} \\ \hline
$\mathsf{tanh}$, $\mathsf{logsig}$ or $\mathsf{atan}$ & $O(p^4N^2)$ & \textbf{this work} \\  \hline
$\mathsf{tanh}$, $\mathsf{logsig}$ or $\mathsf{atan}$ & $O(p^4\mathcal{C}^2)$ & \textbf{this work} \\  \hline\hline
\multicolumn{3}{|c|}{Original GNNs \cite{scarselli2008graph}} \\ \hline
Polynomial  &  $O(p\log(N))$  &  \cite{scarselli2018vapnik}\\ \hline
Piecewise polynomial   & $O(p^2N\log(N))$  & \cite{scarselli2018vapnik} \\ \hline
$\mathsf{tanh}$, $\mathsf{logsig}$ or $\mathsf{atan}$ & $O(p^4N^2)$ & \cite{scarselli2018vapnik} \\  \hline\hline
\multicolumn{3}{|c|}{Positional  RecNNs} \\ \hline
Polynomial  & $O(pN)$ &  \cite{hammer2001generalization} \\ \hline
$\mathsf{logsig}$ & $O(p^4N^2)$ &   \cite{hammer2001generalization}\\ \hline\hline
\multicolumn{3}{|c|}{Recurrent Neural Networks} \\ \hline
Polynomial  & $O(pN)$ &  \cite{koiran1997neural} \\ \hline
Piecewise polynomial  & $O(p^2N)$ &    \cite{koiran1997neural} \\ \hline
$\mathsf{tanh}$ or $\mathsf{logsig}$ & $O(p^4N^2)$ & \cite{koiran1997neural}\\ \hline\hline
\multicolumn{3}{|c|}{Multilayer Networks} \\ \hline
Binary  & $O(p \log p)$ & \cite{baum1988size,maass1994neural,sakurai1995vc}  \\ \hline
Polynomial & $O(p\log p)$  & \ \cite{goldberg1993bounding} \\ \hline
Piecewise polynomial  & $O(p^2)$  & \cite{goldberg1993bounding,koiran1997neural}  \\ 
\hline
$\mathsf{tanh}$, $\mathsf{logsig}$ or $\mathsf{atan}$  & $O(p^4)$ &\cite{karpinski1997polynomial} \\ \hline 
\end{tabular}}
\caption{Upper bounds on the VC dimension of common architectures, where
$p$ is the  number of network parameters,
$N$ the number of nodes in the input graph or sequence, and $\mathcal{C}$ the maximum number of colors per graph.
}\label{VCDcomparison}
\vspace*{-3mm}
\end{table}

The proof of Theorem \ref{th:main_general} adopts the same reasoning used
in \cite{karpinski1997polynomial}  to derive a bound on the VC dimension
of  feedforward neural networks with Pfaffian activation functions, and used
in \cite{scarselli2018vapnik} to provide a bound for the first GNN model.
Intuitively, the proof is based on the following steps: it is shown that
the computation of the GNNs on graphs can be represented by a set of equations
defined by Pfaffian functions with format $(\bar{\alpha},\bar{\beta},\bar{\ell})$, where 
$\bar{\alpha},\bar{\beta},\bar{\ell}$ are those defined in the theorem;
then, the bound is obtained exploiting a result in \cite{karpinski1997polynomial} that 
associates the  VC dimension to the number of connected components in the inverse image of a system of Pfaffian equations. Finally, a result in~\cite{gabrielov2004complexity}  allows to estimate the required number of connected components.  Note that our bound and other bounds obtained for networks with Pfaffian activation functions are larger than those
for networks with simpler activations. As explained in \cite{karpinski1997polynomial} \cite{scarselli2018vapnik},
such a difference is likely due to the current limitations of mathematics in this field, which makes tight bounds more difficult to achieve with Pfaffian functions.

We now specifically derive bounds for the VC dimension for the architecture described by Eqs.~\eqref{eq:morrisGNN}--\eqref{eq:morrisREADOUT}. 

\begin{theorem} \label{th:main}
Let us consider the GNN model described by Eqs.~\eqref{eq:morrisGNN}--\eqref{eq:morrisREADOUT}. If  $\sigma$ is a Pfaffian function in $\mathbf{x}$ with format $(\alpha_\sigma,\beta_\sigma,\ell_\sigma)$, then
the VC dimension satisfies
 \begin{align*}
\mathsf{VCdim}\bigl(\mathsf{GNN}\bigr) &\leq  2\log B + \Bar{p}(16 +2 \log \bar{s} ),
  \end{align*}
  where $B\leq  2^{\frac{\bar{\ell}(\bar{\ell}-1)}{2}+1}(\Bar{\alpha} + 2 \Bar{\beta} -1)^{\Bar{p}-1} ((2 \Bar{p}-1)(\Bar{\alpha} + \Bar{\beta})- 2\Bar{p}+2 )^{\Bar{\ell}}$, $\Bar{\alpha}=2+3\alpha_\sigma$, $\bar{\beta}=\beta_\sigma$, $\bar{\ell}=\Bar{p}H\ell_\sigma$, and $\bar{s}=LNd+Nq+1$ hold.
  
\noindent 
In particular, if $\sigma$ is the logistic sigmoid activation function, we have
\begin{equation*}
\mathsf{VCdim}\bigl(\mathsf{GNN}\bigr) \leq   \Bar{p}^2H^2+ 
  2{\Bar{p}} \log \left(9\right) + 2\Bar{p}H\log\left(16\Bar{p}\right) +\Bar{p}(16 +2 \log(\Bar{s}) ). 
  \end{equation*}
  \qed
\end{theorem}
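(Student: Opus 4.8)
The plan is to obtain Theorem~\ref{th:main} as a specialization of Theorem~\ref{th:main_general} to the concrete architecture of Eqs.~\eqref{eq:morrisGNN}--\eqref{eq:morrisREADOUT}. The only genuinely new work is to read off the Pfaffian formats of the three building blocks $\mathsf{COMBINE}^{(t)}$, $\mathsf{AGGREGATE}^{(t)}$ and $\mathsf{READOUT}$ induced by this model, and then to substitute them into the general expressions for $\bar\alpha,\bar\beta,\bar\ell,H$ and $\bar s$. First I would fix the block formats. The aggregation $\mathsf{POOL}(\lms \mathbf{h}_u \rms, A_v)=\sum_u A_{uv}\mathbf{h}_u$ carries no transcendental part, so $\ell_{\mathsf{agg}}=0$ and it is a pure polynomial; counting the adjacency column and the weight matrix it feeds as variables determines its (small, constant) degree. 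The combination $\mathsf{COMBINE}^{(t)}(\mathbf{h}_v,\mathbf{y})=\bsigma(\mW_{\text{comb}}\mathbf{h}_v+\mW_{\text{agg}}\mathbf{y}+\mb)$ is the activation $\bsigma$ pre-composed with an affine map in its inputs, so it inherits the chain of $\bsigma$: $\ell_{\mathsf{comb}}=\ell_\sigma$, while its degree parameters are governed by $(\alpha_\sigma,\beta_\sigma)$ through the Pfaffian composition rule. The readout of Eq.~\eqref{eq:morrisREADOUT} is again a Pfaffian function applied to a linear form, hence $\ell_{\mathsf{read}}=\ell_\sigma$ and $\beta_{\mathsf{read}}=\beta_\sigma$.

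Next I would substitute these formats into the definitions of Theorem~\ref{th:main_general}. Since $\ell_{\mathsf{agg}}=0$ and $\ell_{\mathsf{comb}}=\ell_{\mathsf{read}}=\ell_\sigma$, the number of computation units collapses to $H=LNd+1$ (one activation per feature component, node and layer, plus the readout), so that the total chain length becomes $\bar\ell=\bar p H\ell_\sigma$, matching the statement. The output-polynomial degree is controlled by $\bsigma$'s polynomial $P$, giving $\bar\beta=\max\{\beta_{\mathsf{comb}},\beta_{\mathsf{read}}\}=\beta_\sigma$, and the input count $\bar s=LNd+Nq+1$ is inherited unchanged. The value $\bar\alpha=2+3\alpha_\sigma$ is the one requiring care: it records the chain degree of $\bsigma$ composed with the pre-activation map, which becomes cubic once the weight matrix and the adjacency entries feeding $\mathsf{AGGREGATE}$ are treated as variables; tracking this through $\bar\alpha=\max\{\alpha_{\mathsf{agg}}+\beta_{\mathsf{agg}}-1+\alpha_{\mathsf{comb}}\beta_{\mathsf{agg}},\,\alpha_{\mathsf{read}}\}$ should yield exactly $2+3\alpha_\sigma$. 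Plugging $\bar\alpha,\bar\beta,\bar\ell,\bar s$ into Eq.~\eqref{VCGNN} then gives the first displayed bound directly.

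For the logistic sigmoid case I would simply insert its format $(\alpha_\sigma,\beta_\sigma,\ell_\sigma)=(2,1,1)$, so that $\bar\alpha=8$, $\bar\beta=1$ and $\bar\ell=\bar p H$. Expanding $\log B$ term by term gives $\bar\alpha+2\bar\beta-1=9$ and $(2\bar p-1)(\bar\alpha+\bar\beta)-2\bar p+2=16\bar p-7\le 16\bar p$, while the leading factor $2^{\bar\ell(\bar\ell-1)/2+1}$ contributes $\bar\ell(\bar\ell-1)=\bar p^2H^2-\bar pH\le\bar p^2H^2$ to $2\log B$. Collecting these estimates, over-approximating the $O(\bar p)$ and additive constant terms (e.g. $\bar p-1\le\bar p$ and $16\bar p-7\le 16\bar p$), and appending the common term $\bar p(16+2\log\bar s)$ produces the stated closed form.

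The step I expect to be the main obstacle is the verification of the block formats, and in particular the bookkeeping that produces $\bar\alpha=2+3\alpha_\sigma$: one must apply the Pfaffian composition/substitution rule to $\bsigma$ pre-composed with the bilinear weight--adjacency map correctly, and check that it agrees with the composite quantity $\alpha_{\mathsf{agg}}+\beta_{\mathsf{agg}}-1+\alpha_{\mathsf{comb}}\beta_{\mathsf{agg}}$ of Theorem~\ref{th:main_general}. By contrast, the sigmoid specialization is a routine simplification in which only lower-order terms need to be over-estimated.
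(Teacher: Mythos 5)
Your overall strategy is the paper's: represent the computation of Eqs.~\eqref{eq:morrisGNN}--\eqref{eq:morrisREADOUT} by a system of Pfaffian equations, determine its format via the composition lemma, and then feed that format into Theorem~\ref{th:gabrielov} (bound on connected components) and Theorem~\ref{th:karpinsky97} (VC bound). Your identification of $H=LNd+1$, $\bar\ell=\bar p H\ell_\sigma$, $\bar\beta=\beta_\sigma$, $\bar s=LNd+Nq+1$, and your expansion of the sigmoid case (with $\bar\alpha+2\bar\beta-1=9$ and $(2\bar p-1)\cdot 9-2\bar p+2=16\bar p-7\le 16\bar p$) all match the paper.

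The one step that does not go through as you describe it is the derivation of $\bar\alpha=2+3\alpha_\sigma$ by substitution into Theorem~\ref{th:main_general}. With $\mathsf{AGGREGATE}=\mathsf{POOL}$ of format $(0,2,0)$ (degree $2$ in the adjacency entries and features) and $\mathsf{COMBINE}=\bsigma\circ(\text{degree-}2\text{ affine-in-weights map})$ of format $(1+2\alpha_\sigma,\beta_\sigma,\ell_\sigma)$, the general formula gives $\alpha_{\mathsf{agg}}+\beta_{\mathsf{agg}}-1+\alpha_{\mathsf{comb}}\beta_{\mathsf{agg}}=0+2-1+2(1+2\alpha_\sigma)=3+4\alpha_\sigma$, not $2+3\alpha_\sigma$: the composition lemma is not tight under iteration, so composing twice ($\bsigma$ with the affine map, then with $\mathsf{POOL}$) overshoots. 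The paper avoids this by not routing through Theorem~\ref{th:main_general} at all: in Lemma~\ref{simple:main_lemma} the entire pre-activation $\mathbf{W}_{\mathsf{comb}}\mathbf{h}_v+\sum_u\mathbf{W}_{\mathsf{agg}}\mathbf{h}_u m_{v,u}+\mathbf{b}$ is treated as a single polynomial of format $(0,3,0)$ and composed with $\bsigma$ exactly once, yielding $(0+3-1+3\alpha_\sigma,\beta_\sigma,\ell_\sigma)=(2+3\alpha_\sigma,\beta_\sigma,\ell_\sigma)$ (and similarly $(1+2\alpha_\sigma,\beta_\sigma,\ell_\sigma)$ for the readout, whose inner form is degree $2$, not linear as you say, though this does not affect the max). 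You in fact state this one-shot cubic composition in the same sentence, so the correct computation is present in your proposal; but as a literal specialization of Theorem~\ref{th:main_general} the argument would only deliver the weaker constant $3+4\alpha_\sigma$ (hence $\log 12$ and $22\bar p$ in place of $\log 9$ and $16\bar p$ for the sigmoid), not the bound as stated.
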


The proof of Theorem \ref{th:main} can be found in Appendix \ref{Appendix:proofs}. 

Interestingly, the bounds on the VC dimension that can be derived from Theorem \ref{th:main}, w.r.t the hyperparameters, turn out to be the same derived in Theorem \ref{th:order_Pfaff_general}. Thus, even if the considered model is simpler, those bounds do not change.



\subsection{Bounds based on the number of the 1--WL colors}\label{subsec:colors}
The developed theory is also easily applied to the case when nodes are grouped according to their colors defined by the Weisfeiler--Lehman algorithm.  Intuitively, since GNNs produce the same features  on group of nodes with the same color,  the  computation can be 
simplified by considering each group as a single entity. As consequence, 
the bounds on VC dimension can be tightened by using colors in place of nodes. 
Formally, for a given graph $G$, let $C_1(G)=\sum_{i=1}^T C^t(G)$
be the number of colors generated by the 1--WL test, where $ C^t(G)$ is the  number of colors at step $t>0$. Moreover, 
let us assume that  $ C^t(G)$ is bounded,  namely 
there exists $C_1$ such that $C_1(G)\leq C_1$ for all the graphs $G$ in the domain $\mathcal{G}$. The following theorem provides a bound on the VC dimension w.r.t. the number of colors produced by the 1--WL test.
\begin{theorem}\label{th:vcdim_colors}
Let us consider the GNN model described by Eqs.~\eqref{eq:morrisGNN}--\eqref{eq:morrisREADOUT}
using the logistic sigmoid $\mathsf{logsig}$ as the activation function.
	Assume a subset $\mathcal{S} \subseteq \mathcal{G}$. The VC dimension of the GNN satisfies   
\begin{align*}
		\mathsf{VCdim}\bigl(\mathsf{GNN}(C_1)\bigr)&\leq \mathcal{O}(C_1^2 )\\
		\mathsf{VCdim}\bigl(\mathsf{GNN}(C_0)\bigr)&\leq \mathcal{O}(\log(C_0) )
\end{align*}
\qed
\end{theorem}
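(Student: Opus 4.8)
The plan is to reduce both inequalities to the hyperparameter bound of Theorem~\ref{th:main} by exploiting the defining feature of the model~\eqref{eq:morrisGNN}--\eqref{eq:morrisREADOUT}: two nodes carrying the same $1$--WL colour are assigned identical hidden features. I would first isolate this as a \emph{colour--collapse} lemma stating that, for every graph $G$, every layer $t$ and every pair of nodes $u,v$ with $c^{(t)}(u)=c^{(t)}(v)$, one has $\mathbf{h}_u^{(t)}=\mathbf{h}_v^{(t)}$. The proof is an induction on $t$: the base case uses $\mathbf{h}_v^{(0)}=\balpha(v)$ and the injectivity of $\mathsf{HASH}_0$, while the inductive step follows from the equivalence between the update~\eqref{eq:morrisGNN} and one refinement round of $1$--WL recalled in Section~\ref{sec:notation}, since equal colours of $v$ and of its neighbourhood multiset at step $t-1$ force equal arguments of $\bsigma$. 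The immediate consequence is that at layer $t$ the network computes only $C^t(G)$ distinct feature vectors, so the number of genuinely distinct feature computations over all $L$ layers is at most $\sum_t C^t(G)\le C_1(G)\le C_1$.

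For the estimate $\mathcal{O}(C_1^2)$ I would rerun the Pfaffian counting behind Theorem~\ref{th:main} after identifying each colour class with a single computation unit. Collapsing equal nodes leaves the Pfaffian format of $\bsigma$ untouched --- so $\bar\alpha=2+3\alpha_\sigma$ and $\bar\beta=\beta_\sigma$ are unchanged --- and only lowers the number of units $H$: the factor $LNd$ appearing in $H$ is replaced by $\mathcal{O}(C_1 d)$ thanks to the collapse lemma. Substituting this reduced $H$ into the dominant term $\bar{p}^2H^2$ of the logistic--sigmoid bound of Theorem~\ref{th:main}, and treating the architectural constants $\bar{p}$ and $d$ as fixed, yields $\mathsf{VCdim}\le\mathcal{O}(C_1^2)$; this exactly mirrors the $\mathcal{O}(N^2)$ dependence of Theorem~\ref{th:order_Pfaff_general}, now with $C_1$ playing the role of $N$.

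For the estimate $\mathcal{O}(\log C_0)$ I would pass from an expressivity count to a cardinality count over the subset $\mathcal{S}$. Since the logistic GNN is constant on $1$--WL equivalence classes, its restriction to $\mathcal{S}$ factors through the finitely many distinguishable colourings counted by $C_0$; consequently the number of binary labellings realisable on any finite sample drawn from $\mathcal{S}$ is bounded by a polynomial in $C_0$, obtained by a Sauer--Shelah estimate applied to the collapsed system whose Pfaffian growth function is controlled exactly as in the proof of Theorem~\ref{th:main_general}. Invoking the elementary fact that shattering $k$ points requires $2^{k}$ realisable dichotomies gives $2^{\mathsf{VCdim}}\le\mathrm{poly}(C_0)$, hence $\mathsf{VCdim}\le\mathcal{O}(\log C_0)$.

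The step I expect to be the main obstacle is this last bound. The collapse lemma and the substitution for $\mathcal{O}(C_1^2)$ are essentially bookkeeping layered on top of Theorem~\ref{th:main}, whereas the $\mathcal{O}(\log C_0)$ estimate requires a precise statement of what $C_0$ enumerates over $\mathcal{S}$ together with a genuinely polynomial (not merely exponential) bound on the number of GNN--realisable dichotomies in terms of $C_0$. Securing this demands combining the $1$--WL invariance with the bound on the number of connected components in the preimage of the defining Pfaffian equations, while ensuring that the readout~\eqref{eq:morrisREADOUT} --- a single logistic unit acting on a colour--count aggregate --- does not smuggle back a dependence on the raw node count $N$.
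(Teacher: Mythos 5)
Your argument for the $\mathcal{O}(C_1^2)$ bound is essentially the paper's: nodes sharing a 1--WL colour carry identical features, so the $LNd$ feature equations collapse to $C_1d$ of them, $H$ is replaced by $H_c=C_1d+1$, and the dominant term $\bar{p}^2H^2$ of Theorem~\ref{th:main} gives the quadratic dependence. The one step you defer --- making the readout of Eq.~\eqref{eq:morrisREADOUT} survive the collapse --- is exactly what the paper has to work for: it introduces an extended model (EGNN) whose readout is $f\bigl(\sum_v \mw\,\mathbf{h}_v^{(L)}c_v+b\bigr)$, where $c_v$ is an extra input equal to the multiplicity of the colour class, shows that the EGNN applied to the colour--collapsed graph simulates the original model, and then bounds the VC dimension of the original model by that of the EGNN (simulation implies the EGNN shatters whatever the original shatters). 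Without such a device the collapsed readout $\sum_{v'}\mw\,\mathbf{h}_{v'}^{(L)}$ does not reproduce the original sum over nodes, so this is a needed construction, not mere bookkeeping.

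The genuine gap is in your $\mathcal{O}(\log C_0)$ argument. In this paper $C_0$ counts the colours at step $t=0$, i.e.\ the distinct initial node attributes; after collapsing, it enters the Pfaffian system only through the $C_0q$ initialisation equations $\mathbf{h}_v^{(0)}-\mathbf{L}_v=0$, so the total number of equations becomes $\bar{s}_c=C_1d+C_0q+1$. The GNN's output certainly does \emph{not} factor through the initial colouring (the whole refinement matters), so the premise of your cardinality argument fails. Moreover, even if the hypothesis class did factor through $K$ input--equivalence classes, that would give $\mathsf{VCdim}\le K$, not $\mathcal{O}(\log K)$; to obtain a logarithm you would need a polynomial--in--$C_0$ bound on the number of realisable dichotomies, which you assert but do not derive. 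The paper's actual reason is far more direct: Theorem~\ref{th:karpinsky97} bounds the VC dimension by $2\log B+\bar{p}(16+2\log\bar{s})$, and $C_0$ appears only inside $\bar{s}_c$, never in $B$ (whose exponents involve $\bar{\ell}_c=\bar{p}H_c\ell_\sigma$ with $H_c=C_1d+1$); hence the dependence on $C_0$ is confined to the term $2\bar{p}\log(C_1d+C_0q+1)$ and is logarithmic essentially for free.
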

The theorem suggests that the VC dimension depends quadratically on the  total number of node colors and logarithmically on the initial number of colors. Actually, a GNN  processes
all the nodes of a graph at the same time and the GNN architecture is similar
to a feedforward network where some  computation units are replicated at each node. Thus, the complexity of the GNN grows with the number of nodes and this explains the dependence of the VC dimension on the number of nodes (see Theorem~\ref{th:order_Pfaff_general}). On the other hand, nodes with the same colors
cannot be distinguished by the GNN: this means that, in theory, we can use the same computation units for a group of nodes sharing the color. Therefore, to get tighter bounds on the VC dimension, we can consider the number of colors in place of the number of nodes. 

Finally, it is worth mentioning that, the presented theorems suggest that 
GNNs may have a worst generation capability when the domain is composed by graphs with may different colors. This happens because when the number of colors in each graph increases, the  VC dimension increases as well.  On the other hand, the generalization capability  benefits from  a large total number  of colors in the training set. Actually, generalization depends not only on VC dimension, but, obviously,  also on the number of patterns in  training set (see Eq.~\eqref{eq:diff_bound}). In GNN graph--focused tasks, graphs play the role of
 patterns, where we count only the graphs with different colors, as those with the same colors are just copies of the same pattern. A similar reasoning applies to node--focused task, by counting the total number of nodes with different color in training set.

\section{Experimental validation}\label{sec:experiments}

In this section, we present an experimental validation of our theoretical results. We will show how the VC dimension of GNNs, descibed in Eqs.~\eqref{eq:morrisGNN}--\eqref{eq:morrisREADOUT}, changes as the hyperparameters vary, respecting the bounds found in Theorems \ref{th:order_Pfaff_general} and \ref{th:vcdim_colors}.

\subsection{Experimental setting}
We design two experiments to assess the validity, respectively, of Theorems \ref{th:order_Pfaff_general} and \ref{th:vcdim_colors}. In both cases, we train a Graph Neural Network, composed by message passing layers, defined as in Eq.~\eqref{eq:morrisGNN}, where the activation function $\bsigma$ is $\mathsf{arctan}$ or $\mathsf{tanh}$; the final $\mathsf{READOUT}$ layer is an affine layer with $\mathbf{W}_{\mathsf{out}} \in \mathbb{R}^{1 \times \mathsf{hd}}$, after which a $\mathsf{logsig}$ activation function is applied.  The model is trained via Adam optimizer with an initial learning rate $\lambda = 10^{-3}$. The hidden feature size is denoted by $\mathsf{hd}$ and the number of layers by $l$.
\begin{itemize}
    \item[\textbf{E1}:] We measure the evolution of the difference between the training accuracy and the validation accuracy, $\mathsf{diff}= \mathsf{training}\_\mathsf{acc} - \mathsf{test\_acc}$, through the training epochs, over three different datasets taken from the TUDataset repository \cite{morris2020tudataset}. In particular, \textbf{PROTEINS} \cite{borgwardt2005protein} is a dataset of proteins represented as graphs which contains both enzymes and non--enzymes; \textbf{NCI1} \cite{wale2008comparison} is a dataset of molecules relative to anti--cancer screens where the chemicals are assessed as positive or negative to cell lung cancer; finally, \textbf{PTC-MR} \cite{helma2001predictive} is a collection of chemical compounds represented as graphs which report the carcinogenicity for rats. The choice of the datasets has been driven by their binary classification nature. Their statistics are summarized in Table \ref{tab:benchmark:statistics}. In the experiments, firstly, we fix the hidden feature size to $\mathsf{hd} = 32$ and let the number of layers vary in the range $l \in [2,3,4,5,6]$, to measure how $\mathsf{diff}$ evolves. Subsequently,  we fix the number of layers to $l = 3$ and let the hidden feature size vary in the range,  $\mathsf{hd} \in [8,16,32,64,128]$, to perform the same task. We train the model for 500 epochs in each run, with the batch size set to 32.
    \item[\textbf{E2}:] We measure the evolution of the difference between the training accuracy and the validation accuracy, $\mathsf{diff}$, through the training epochs over the dataset \textbf{NCI1}, whose graphs are increasingly ordered according to the ratio $\frac{|V(G)|}{C^t(G)}$ and split in four different groups. The intuition here is that, being the number of graph nodes bounded, splitting the ordered dataset as described above, should provide four datasets in which the total number of colors is progressively increasing. The hidden size is fixed at $\mathsf{hd}=16$, the number of layers is $l=4$, the batch size is fixed equal to 32. In Table \ref{table:summary}, we report the reference values for each split. We train the model for 2000 epochs (the number of epochs is greater than in the \textbf{E1} task because \textbf{E2} shows greater instability during training).
\end{itemize}

\begin{table}[ht]
    \centering
    \begin{tabular}{|c|c|c|c|c|}
    \hline
        Dataset & \# Graphs & \# Classes & Avg. \# Nodes & Avg. \# Edges \\
        \hline \hline
       \textbf{\small{PROTEINS}}  & 1113 & 2 & 39.06 & 72.82 \\
        \textbf{\small{NCI1}}  &  4110 & 2 & 29.87 & 32.30\\
         \textbf{\small{PTC-MR}} & 344 & 2 & 14.29 & 14.69 \\ \hline
    \end{tabular}
    \caption{Statistics on the benchmark datasets used for \textbf{E1}.}
    \label{tab:benchmark:statistics}
\end{table}

\begin{table}[ht]
\centering
\begin{tabular}{|c||c|c|c|c|}
\hline
   & Split 1 & Split 2 & Split 3 & Split 4 \\
     \hline \hline
  \# Nodes   & 27667 & 30591 & 31763 & 32673\\ [0.5ex] 
\# Colors  & 26243 & 26569 & 24489 & 16348 \\
$\min \limits_G\frac{\#\text{Nodes}(G)}{\#\text{Colors}(G)}$    & 1.000 & 1.105 & 1.208 &1.437\\
$\max \limits_G\frac{\#\text{Nodes}(G)}{\#\text{Colors}(G)}$    & 1.105 & 1.208 &1.437 & 8\\
     \hline
\end{tabular}\caption{Summary of the parameters for each split of the ordered \textbf{NCI1} dataset in task \textbf{E2}.}\label{table:summary}
\end{table}

Each experiment is statistically evaluated over $10$ runs. The overall training is  performed on an Intel(R) Core(TM) i7-9800X processor running at 3.80GHz, using 31GB of RAM and a GeForce GTX 1080 Ti GPU unit.
The code developed to run the experiments exploits the Python package \url{Pytorch Geometric} and can be found at \url{https://github.com/AleDinve/vc-dim-gnn}.

\subsection{Experimental results}

\paragraph{Task \textbf{E1} ---}Numerical results for the \textbf{NCI1} dataset are reported in \cref{fig:atan_NCI1,fig:tanh_NCI1}, for $\mathsf{arctan}$ and $\mathsf{tanh}$ activations, respectively, while results for \textbf{PROTEINS} and \textbf{PTC\_MR} are reported in Appendix \ref{Appendix:experiments}. In particular, in both \cref{fig:atan_NCI1,fig:tanh_NCI1}, the evolution of $\mathsf{diff}$ is shown as the number of epochs varies, for different
values of $\mathsf{hd}$ keeping fixed $l = 3$ in (a), and for different values of $l$ keeping fixed
$\mathsf{hd} = 32$ in (c), respectively. Moreover, the evolution of $\mathsf{diff}$ as the hidden size increases, for
varying epochs, is shown in (b), while (d) depicts how $\mathsf{diff}$ evolves as the number of layers increases.

\begin{figure}[h!]
    \begin{subfigure}{0.49\linewidth}
        \includegraphics[width=\textwidth]{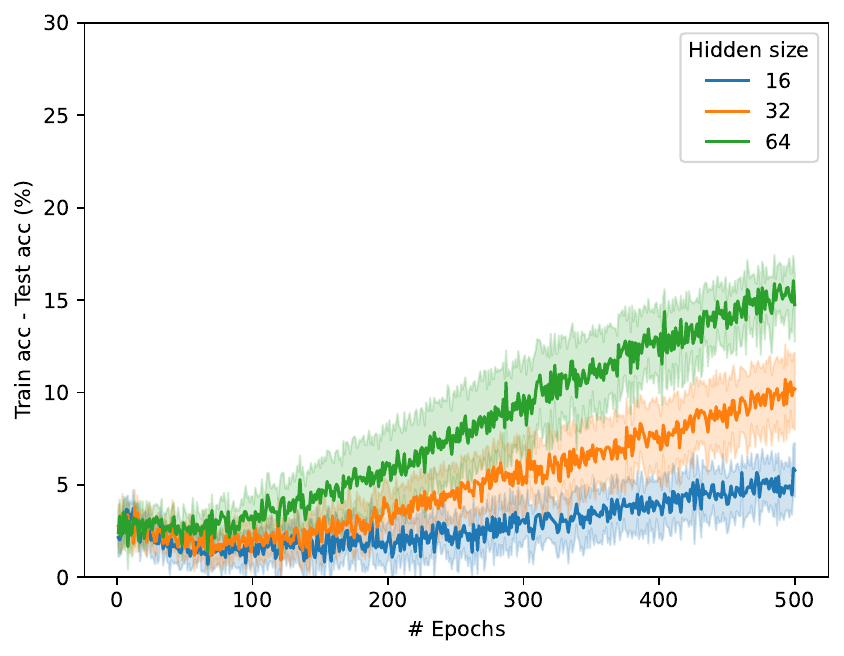}
        \caption{}
        \label{fig:atan_NCI1_a}
    \end{subfigure}
    \begin{subfigure}{0.49\linewidth}
        \includegraphics[width=\textwidth]{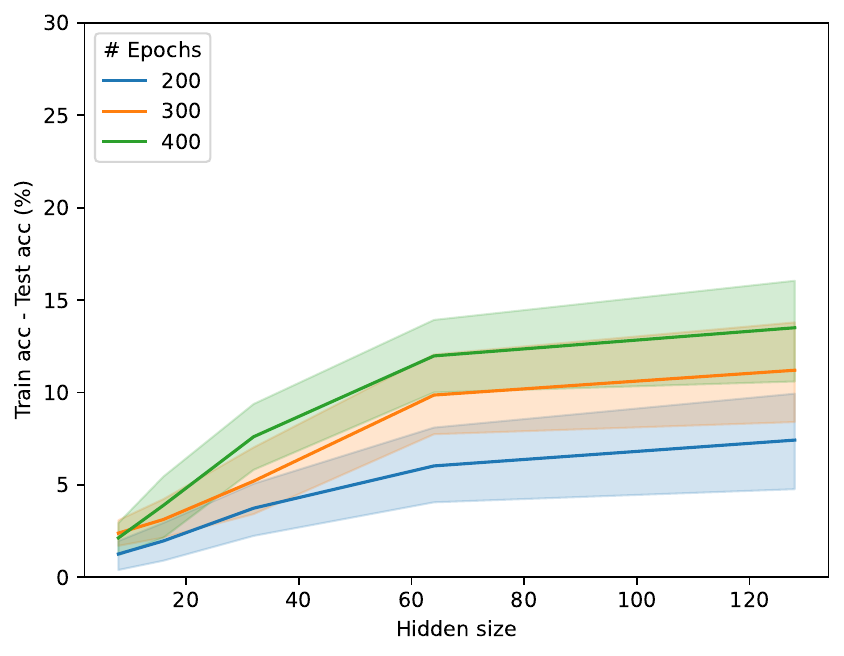}
        \caption{}
        \label{fig:atan_NCI1_b}
    \end{subfigure} 
    \begin{subfigure}{0.49\linewidth}
        \includegraphics[width=\textwidth]{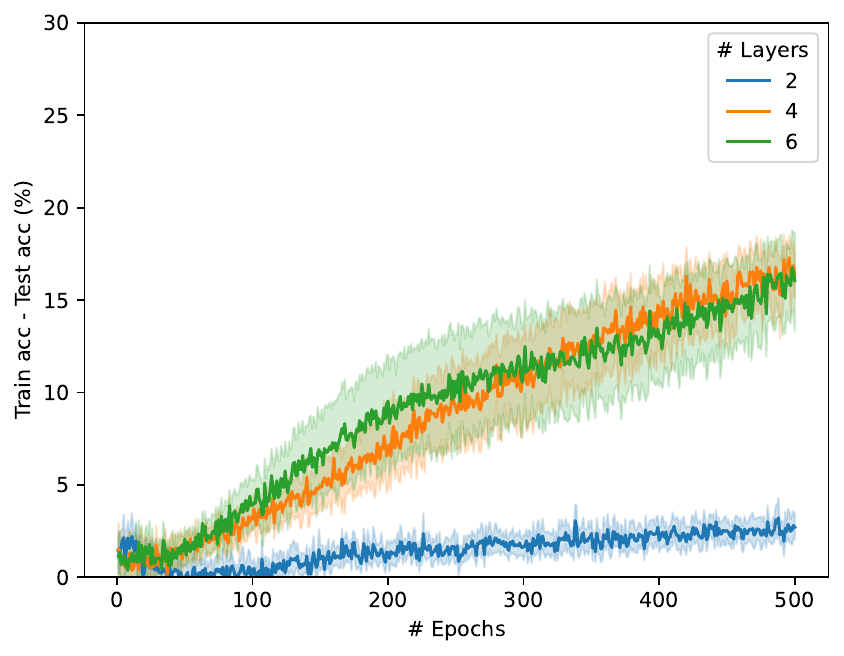}
        \caption{}
        \label{fig:atan_NCI1_c}
    \end{subfigure}
    \begin{subfigure}{0.49\linewidth}
        \includegraphics[width=\textwidth]{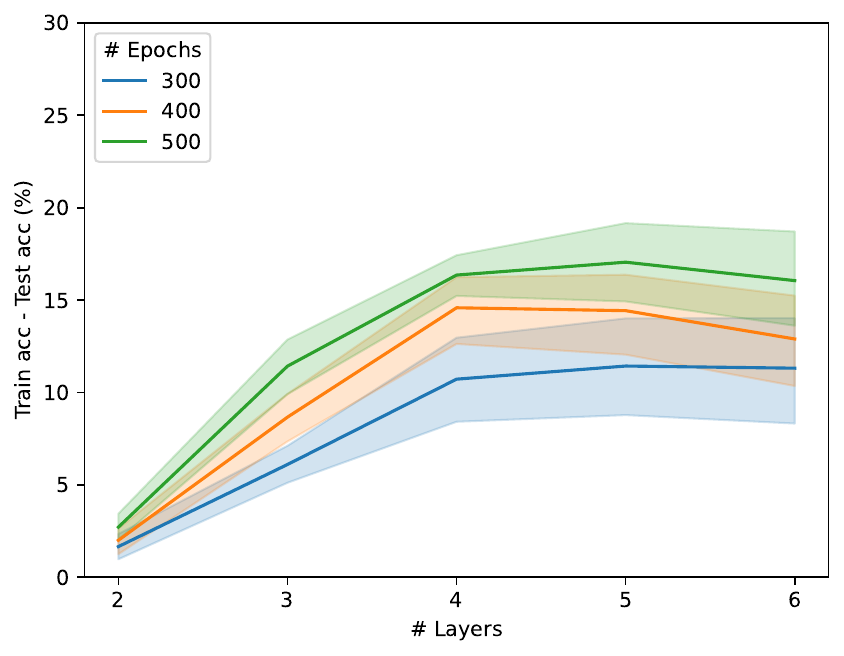}
        \caption{}
        \label{fig:atan_NCI1_d}
    \end{subfigure}\caption{Results on the task \textbf{E1}  for GNNs with activation function $\mathsf{arctan}$. }
    \label{fig:atan_NCI1}
\end{figure}

\begin{figure}[h!]
    \begin{subfigure}{0.49\linewidth}
        \includegraphics[width=\textwidth]{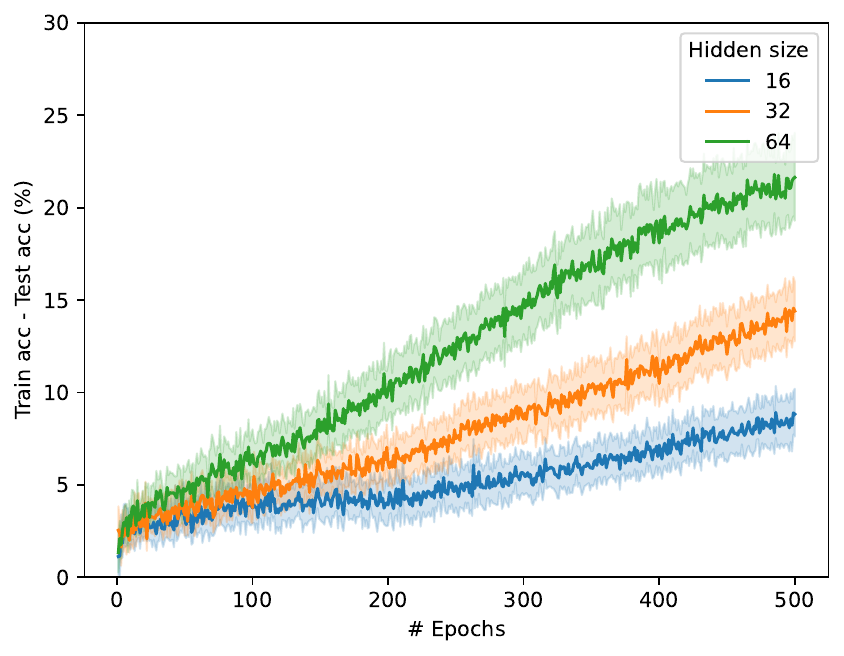}
        \caption{}
        \label{fig:tanh_NCI1_a}
    \end{subfigure}
    \begin{subfigure}{0.49\linewidth}
        \includegraphics[width=\textwidth]{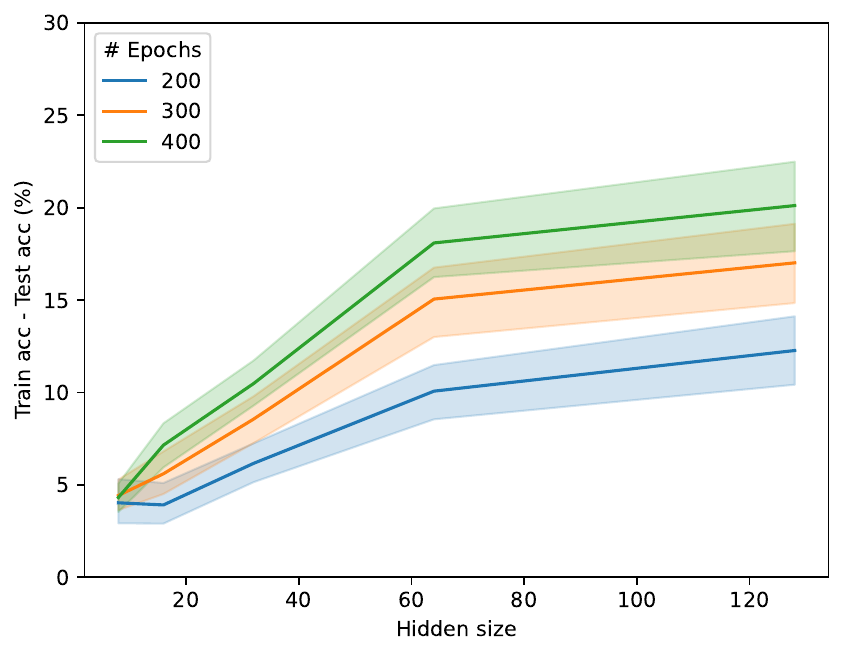}
        \caption{}
        \label{fig:tanh_NCI1_b}
    \end{subfigure}
    \begin{subfigure}{0.49\linewidth}
        \includegraphics[width=\textwidth]{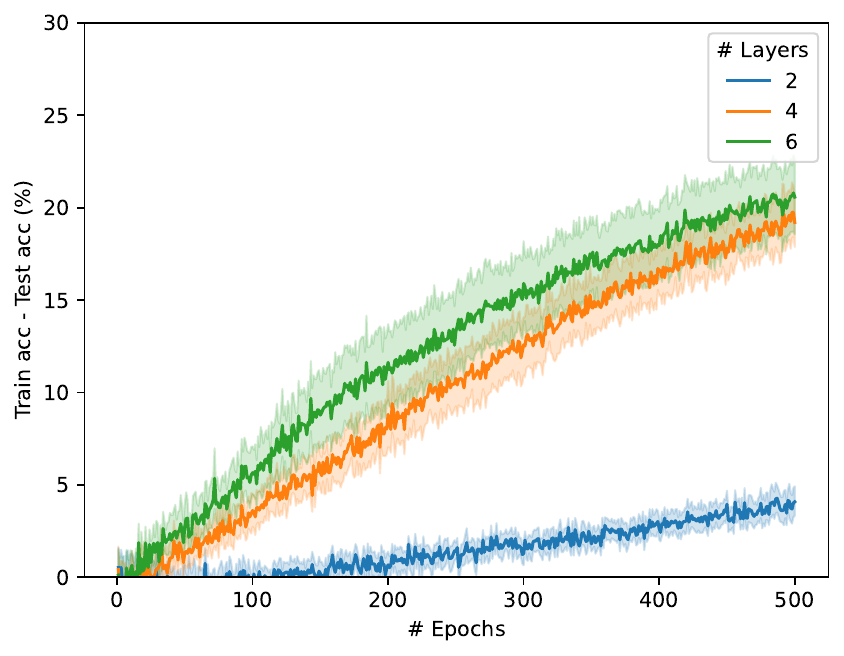}
        \caption{}
        \label{fig:tanh_NCI1_c}
    \end{subfigure}
    \begin{subfigure}{0.49\linewidth}
        \includegraphics[width=\textwidth]{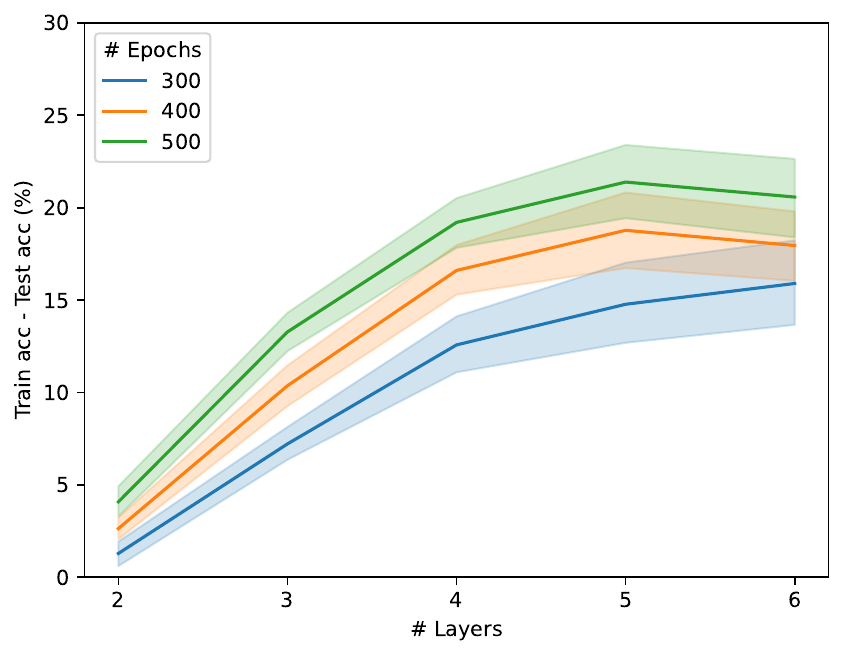}\caption{}
        \label{fig:tanh_NCI1_d}
    \end{subfigure}\caption{Results on the task \textbf{E1}  for GNNs with activation function $\mathsf{tanh}$.}
\label{fig:tanh_NCI1}
\end{figure}

The behaviour of the evolution of $\mathsf{diff}$ proves to be consistent with the bounds provided by Theorem \ref{th:order_Pfaff_general} with respect to increasing the hidden dimension or the number of layers. Although it is hard to establish a precise function that links the VC dimension to $\mathsf{diff}$, given also the complex nature of Pfaffian functions, we can partially rely on Eq.~\eqref{eq:diff_bound} (which is valid for large sample sets) to argue that our bounds are verified by this experimental setting.

\paragraph{Task E2 ---}Numerical results for this task on the \textbf{NCI1} dataset are reported in Figure \ref{fig:exp_colors}, considering the $\mathsf{tanh}$ activation function. In particular, the evolution of $\mathsf{diff}$ is shown in (a) as the number of epochs varies, for different values of $\frac{V(G)}{C^t(G)}$, keeping fixed $l=4$ and $\mathsf{hd}=16$; instead, the evolution of $\mathsf{diff}$ as the ratio $\frac{V(G)}{C^t(G)}$ increases is depicted in (b), for the number of epochs varying in $\{1000,1500,2000\}$.

\begin{figure}[h]
        \begin{subfigure}{0.49\linewidth}
        \includegraphics[width=\textwidth]{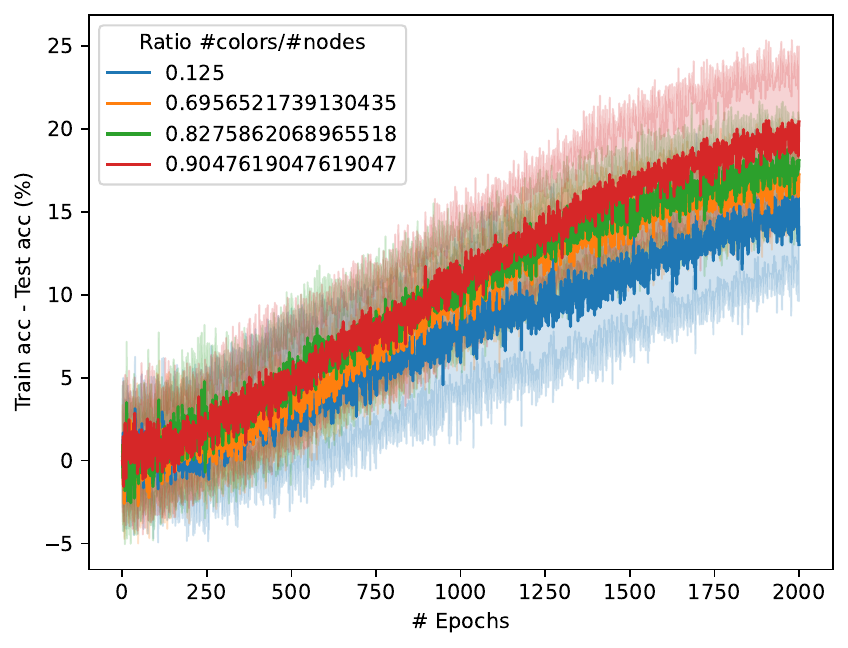}
        \caption{}
        \label{fig:colors_epochs}
    \end{subfigure}
    \begin{subfigure}{0.49\linewidth}
        \includegraphics[width=\textwidth]{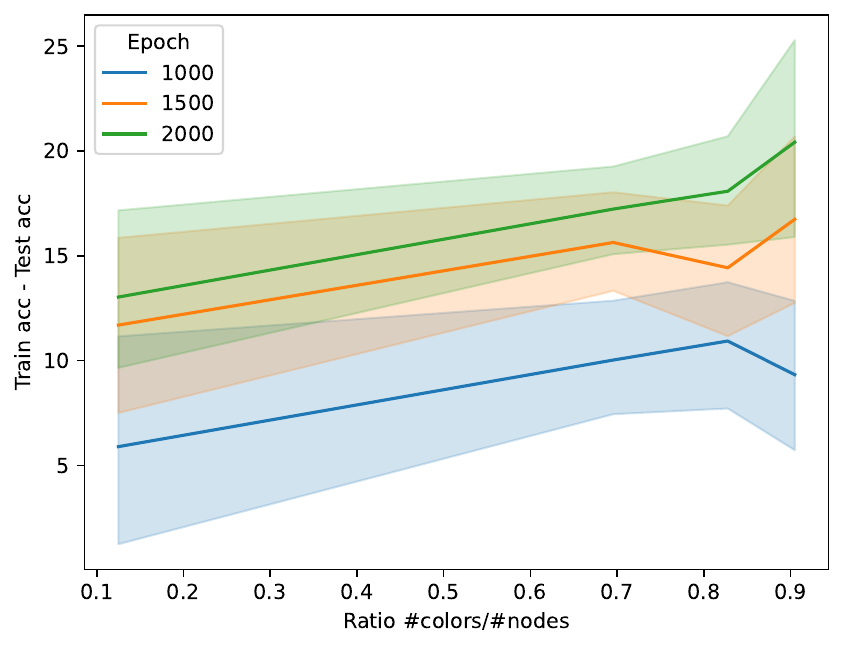}
        \caption{}
        \label{fig:colors_ratio}
    \end{subfigure}
\caption{Results on the task \textbf{E2}  for GNNs with activation function $\mathsf{tanh}$. }\label{fig:exp_colors}
\end{figure}

Similar observations as for the experimental setting \textbf{E1} can be drawn here: indeed, the evolution of $\mathsf{diff}$ in our experiment is consistent with the bounds presented in Theorem \ref{th:vcdim_colors}, as the ratio between colors and nodes increases.

\section{Discussion}\label{sec:discussion}

In this work we derived new bounds for the VC dimension of modern message passing GNNs with Pfaffian activation functions, closing the gap present in the literature with respect to many common used activation functions; furthermore, we propose a preliminary experimental validation to demonstrate the coherence between theory and practice. 

Different research perspectives can be envisaged to improve the results obtained: first, our analysis lacks the derivation of \textit{lower bounds}, which could provide a more precise intuition of the degradation of generalization capabilities for GNNs within the chosen  architectural framework. In addition, providing a relationship between the VC dimension and the difference between the training and test accuracy would be much more informative; we could establish a quantitative measure with respect to the number of parameters that would allow us to better explain the experimental performance. Finally, the proposed analysis on the VC dimension deserves to be extended to other GNN paradigms, such as Graph Transformers \cite{yun2019graph} and Graph Diffusion Models \cite{zhang2023survey}.

\bibliographystyle{unsrt}
\bibliography{references}

\appendix
\section{Proof of the main results}\label{Appendix:proofs}
 The proof of Theorems \ref{th:main_general},\ref{th:main} and \ref{th:vcdim_colors} adopts the same reasoning used
in \cite{karpinski1997polynomial} and  \cite{scarselli2018vapnik} to derive a bound on the VC dimension of  feedforward neural networks and the original GNN model, respectively.
Before proceeding with the proofs,  let us  introduce the required
notation and some results from  \cite{karpinski1997polynomial}. These results  will provide us 
with the mathematical tools to represent the computation of  GNNs with a set 
of Pfaffian equations and to bound the VC dimension based on the format of the functions involved in such equations. 

\subsection{Notation and results from the literature} \label{sec:pfaffianLiterature}

\subsubsection*{Representing a set of equations by a logical formula}
Formally, we use a theory in which a classifier is described  by a logical formula
that  is constructed by combining Pfaffian equations. Thus, let $\tau_1, \dots, \tau_{\Bar{s}}$ be a set of $C^{\infty}$ (infinitely differentiable) functions from $\mathbb{R}^{\gamma+p}$ to $\mathbb{R}$. Suppose that $\Phi(\mathbf{y},\btheta), \mathbf{y} \in \mathbb{R}^{\gamma}$, $\btheta \in \mathbb{R}^{p}$ is a quantifier--free logical formula constructed using the operators  \textit{and} and \textit{or}, and atoms in the form of  $\tau_i (\mathbf{y}, \btheta)=0$.  Note that, fixed $\btheta$, $\Phi(\cdot, \btheta)$ takes as input a vector $\mathbf{y}$ and returns a logical value, so that it can be considered as a classifier with input $\mathbf{y}$ and parameters $\btheta$.  Moreover, the formula can be also used to represent a set of Pfaffian equations, which corresponds to the case when $\Phi$ includes only the operator \textit{and} and $\tau_i$ are Pfaffian. Actually, later, we will see that $\tau_1, \dots, \tau_{\Bar{s}}$ can be specified so that $\Phi$ defines the computation of a GNN.

\subsubsection*{The VC dimension of $\Phi$}
The VC dimension of  $\Phi$ can be defined in the usual way. Indeed, $\Phi$ is said to shatter a set $\mathcal{S}= \{ \Bar{\mathbf{y}}_1, \dots , \Bar{\mathbf{y}}_r \}$ if, for any set of binary assignments $\delta = [\delta_1, \dots, \delta_r ] \in \{ 0,1\}^r$, there exist parameters $\Bar{\btheta}$ such that, for any $i$, $\Phi(\mathbf{y}_i,\Bar{\btheta})$ is true if $\delta_i = 1$, and 
false if $\delta_i=0$. Then, the VC dimension of $\Phi$ is defined as the size of the maximum set that $\Phi$ can shatter, i.e.,
\begin{equation*}
    \mathsf{VCdim}(\Phi) =\max \limits _{\text{$\mathcal{S}$ is shattered by $\Phi$}} |\mathcal{S}|.
\end{equation*}

Interestingly,  the VC dimension of  $\Phi$ 
can be bounded by studying the topological properties of the inverse image, in the parameter domain, of the functions $\tau_i$ \cite{karpinski1997polynomial}. More precisely,  let $\Bar{\mathbf{y}}_1, \dots , \Bar{\mathbf{y}}_z$ be vectors in $\mathbb{R}^{\Bar{\gamma}}$, and $\mathbf{T}: \mathbb{R}^{\Bar{p}} \rightarrow \mathbb{R}^{\Bar{u}}$, $\Bar{u}\leq \Bar{p}$, be defined as 
\begin{equation}
    \mathbf{T}(\Bar{\btheta}) = [\Bar{\tau}_1(\Bar{\btheta}), \dots, \Bar{\tau}_{\Bar{u}}(\Bar{\btheta})],
    \label{eq:TPaff}
\end{equation}
where $\Bar{\tau}_1(\Bar{\btheta}), \dots, \Bar{\tau}_{\Bar{u}}(\Bar{\btheta})$ are functions of the form of $\tau_i (\Bar{\mathbf{y}}_j, \Bar{\btheta})$, i.e., for each $r$, $1 \leq r \leq \Bar{u}$, there exist integers $i$ and $j$ such that $\Bar{\tau}_r(\Bar{\btheta}) = \tau_i (\Bar{\mathbf{y}}_j, \Bar{\btheta})$. Let $[\epsilon_1, \dots, \epsilon_{\Bar{u}}]$ be a regular value~\footnote{We recall that $[\epsilon_1, \dots, \epsilon_{\Bar{u}}]$ is a regular value of $\mathbf{T}$ if either $\mathbf{T}^{-1}([\epsilon_1, \dots, \epsilon_{\Bar{u}}])= \emptyset$ or $\mathbf{T}^{-1}([\epsilon_1, \dots, \epsilon_{\Bar{u}}])$ is a $(\Bar{p}-\Bar{u})$--dimensional $C^{\infty}$--submanifold of $\mathbb{R}^{\Bar{p}}$.} of $\mathbf{T}$ and assume that there exists a positive integer $B$ that bounds the number of connected components of $\mathbf{T}^{-1}(\epsilon_1, \dots , \epsilon_{\Bar{u}})$ and does not depend on the chosen $\epsilon_1, \dots , \epsilon_{\Bar{u}}$ and on the selected $\Bar{\mathbf{y}}_j$. Then, the following proposition holds \cite{karpinski1997polynomial}.

\begin{theorem}\label{th:karpinsky97}
The VC dimension of $\Phi$ is bounded as follows:

\begin{equation*}
   \mathsf{VCdim}(\Phi) \leq 2 \log B + \Bar{p}(16 +2 \log \bar{s} )\,.
\end{equation*}
\end{theorem}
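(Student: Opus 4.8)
The plan is to follow the classical two--stage paradigm for VC bounds of smoothly parametrized classifier families: a combinatorial reduction that turns shattering into a lower bound on the number of constant--sign cells carved out in parameter space, followed by a topological upper bound on that number of cells expressed through the connected--component count $B$. The whole argument takes place in the parameter domain $\mathbb{R}^{\Bar{p}}$, with the sample points $\Bar{\mathbf{y}}_j$ playing the role of fixed coefficients.

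First I would set up the reduction. Suppose $\Phi$ shatters a set $\mathcal{S}=\{\Bar{\mathbf{y}}_1,\dots,\Bar{\mathbf{y}}_r\}$ of size $r$, and consider the finite family of $\Bar{s}\,r$ smooth functions $\btheta\mapsto\tau_i(\Bar{\mathbf{y}}_j,\btheta)$ on $\mathbb{R}^{\Bar{p}}$, for $1\leq i\leq\Bar{s}$ and $1\leq j\leq r$. For a fixed $j$, the truth value of $\Phi(\Bar{\mathbf{y}}_j,\btheta)$ is determined entirely by which atoms $\tau_i(\Bar{\mathbf{y}}_j,\btheta)=0$ hold, hence by the zero/nonzero status of these functions; consequently $\Phi(\Bar{\mathbf{y}}_j,\cdot)$ is constant on every connected region of $\mathbb{R}^{\Bar{p}}$ on which each $\tau_i(\Bar{\mathbf{y}}_j,\cdot)$ keeps a fixed sign in $\{-,0,+\}$. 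It follows that the labelling map $\btheta\mapsto(\Phi(\Bar{\mathbf{y}}_1,\btheta),\dots,\Phi(\Bar{\mathbf{y}}_r,\btheta))$ is constant on each cell of the sign--condition arrangement generated by the whole family, and shattering forces this map to be surjective onto $\{0,1\}^r$. Therefore $2^r\leq\mathcal{N}$, where $\mathcal{N}$ is the number of connected sign--condition cells of the arrangement.

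Second, I would bound $\mathcal{N}$ topologically, and this is exactly where the hypothesis on $B$ enters. The key reduction is that in $\mathbb{R}^{\Bar{p}}$ every nonempty cell can be pinned down, after perturbing to regular values $[\epsilon_1,\dots,\epsilon_{\Bar{u}}]$, by intersecting at most $\Bar{p}$ of the defining functions; so $\mathcal{N}$ is controlled by summing, over all subfamilies of size $\Bar{u}\leq\Bar{p}$ chosen from the $\Bar{s}\,r$ functions, the number of connected components of the corresponding regular--value preimage $\mathbf{T}^{-1}(\epsilon_1,\dots,\epsilon_{\Bar{u}})$ of the map $\mathbf{T}$ of Eq.~\eqref{eq:TPaff}. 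Each such preimage has at most $B$ components by assumption, the number of subfamily choices is at most $(\Bar{s}\,r+1)^{\Bar{p}}$, and an additional factor of the form $2^{O(\Bar{p})}$ accounts for the sign choices of the remaining functions on each component. This yields an estimate of the shape $\mathcal{N}\leq 2^{O(\Bar{p})}(\Bar{s}\,r)^{\Bar{p}}B$.

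Finally I would close the loop arithmetically: combining $2^r\leq\mathcal{N}$ with the estimate above and taking logarithms gives $r\leq\log B+\Bar{p}\log(\Bar{s}\,r)+O(\Bar{p})$, i.e.\ an inequality of the form $r\leq a+b\log r$ with $a=\log B+\Bar{p}\log\Bar{s}+O(\Bar{p})$ and $b=\Bar{p}$. Applying the standard absorption lemma for such self--referential inequalities roughly doubles the principal terms and converts the residual $O(\Bar{p})$ and $\Bar{p}\log\Bar{p}$ contributions into an additive multiple of $\Bar{p}$, producing precisely $r\leq 2\log B+\Bar{p}(16+2\log\Bar{s})$ and hence the bound on $\mathsf{VCdim}(\Phi)$. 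The hard part will be the second step: rigorously reducing the count of sign--condition cells to connected components of regular--value preimages requires transversality (Sard's theorem to guarantee that generic $[\epsilon_1,\dots,\epsilon_{\Bar{u}}]$ are regular after perturbation) together with a Morse--theoretic comparison between cells of the arrangement and components of nearby level sets, and it is this step --- supported by the Khovanskii/Gabrielov--type control of $B$ for Pfaffian systems --- that fixes the exact constants. This topological input is exactly what is imported from \cite{karpinski1997polynomial}.
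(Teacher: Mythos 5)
This theorem is not proved in the paper at all: it is imported verbatim as a known result, with the paper's ``proof'' consisting of the citation to \cite{karpinski1997polynomial}. Your blind reconstruction is therefore being compared against a citation rather than an argument, and what you outline is indeed the standard Karpinski--Macintyre two-stage scheme from that reference: (i) shattering a set of size $r$ forces at least $2^r$ connected sign-condition cells of the arrangement generated by the $\Bar{s}\,r$ functions $\btheta\mapsto\tau_i(\Bar{\mathbf{y}}_j,\btheta)$ in parameter space; (ii) the cell count is bounded by $B$ times a combinatorial factor obtained by counting components of regular level sets of subsystems of at most $\Bar{p}$ of these functions, via Sard-type perturbation; (iii) the resulting self-referential inequality $r\leq a+\Bar{p}\log r$ is absorbed to yield the stated constants. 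That is the right skeleton, and your identification of step (ii) as the technical core is accurate. Be aware, though, that your write-up remains a sketch at exactly the point where the constants $16$ and $2$ are determined: the claim that every nonempty cell is ``pinned down'' by a regular level set of at most $\Bar{p}$ functions, and that the leftover $\Bar{p}\log\Bar{p}$ and $O(\Bar{p})$ terms collapse into the additive $16\Bar{p}$, requires the precise perturbation and counting lemmas of \cite{karpinski1997polynomial} and is not something your outline establishes independently. Since the paper itself defers entirely to that reference, your proposal is consistent with (and more informative than) what the paper provides, but it should be presented as a reconstruction of the cited proof, not as a self-contained one.
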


Therefore, Theorem \ref{th:karpinsky97} provides a bound on the VC dimension of $\Phi$ that depends on  the number $\Bar{p}$ of parameters,  the total number $\Bar{s}$ of functions $\tau_i$, and the bound $B$ on the number of connected components of  $\mathbf{T}^{-1}$.

\subsubsection*{A bound on the number of connected components}

A bound $B$ on the number of connected components of $\mathbf{T}^{-1}$ can  be obtained based on known results from the literature.  In particular, the following theorem provides a bound in the case of a set of Pfaffian equations.
  
\begin{theorem}[\cite{gabrielov2004complexity}]\label{th:gabrielov}
Consider a system of equations $\Bar{q}_1(\btheta) =0, \dots , \Bar{q}_k(\btheta) =0$, where $\Bar{q}_i$, $1 \leq i \leq k$, are Pfaffian functions in a domain $G \subseteq \mathbb{R}^{\Bar{p}}$, having a common Pfaffian chain of length $\Bar{\ell}$ and maximum degrees $(\Bar{\alpha}, \Bar{\beta})$. Then the number of connected components of the set $\{ \btheta | \Bar{q}_1(\btheta) =0, \dots, \Bar{q}_k(\btheta) =0 \}$ is bounded by
\begin{equation*}
    2^{\frac{\Bar{\ell}(\Bar{\ell}-1)}{2}+1} (\Bar{\alpha} + 2 \Bar{\beta} -1)^{\Bar{p}-1}((2 \Bar{p}-1)(\Bar{\alpha} + \Bar{\beta})- 2\Bar{p}+2 )^{\Bar{\ell}}.
\end{equation*}
\end{theorem}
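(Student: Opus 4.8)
The plan is to follow the fewnomial-theoretic strategy of Khovanskii, as sharpened by Gabrielov and Vorobjov: reduce the count of connected components of the Pfaffian variety to a count of nondegenerate zeros of an auxiliary \emph{square} Pfaffian system, and then bound that zero count by a Bezout-type estimate proved by induction on the chain length $\bar\ell$. First I would reduce to the smooth case. Since only an upper bound is needed, I would perturb the right-hand sides to regular values (justified by Sard's theorem, using that the $\bar q_i$ are analytic, so the critical values have measure zero), so that the set $V := \{\btheta : \bar q_1(\btheta)=0,\dots,\bar q_k(\btheta)=0\}$ becomes a smooth submanifold of codimension $k$; a suitable perturbed set dominates each component of $V$, so the bound transfers. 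To detect every connected component I would use the squared-distance function $\|\btheta-\mathbf{a}\|^2$ to a generic point $\mathbf{a}$: it is proper on each closed component and therefore attains a minimum there, so every component contains at least one of its critical points on $V$. These critical points are precisely the $\btheta$ at which $\btheta-\mathbf{a}$ lies in the span of $\nabla\bar q_1,\dots,\nabla\bar q_k$, i.e. where the $(\bar p)\times(k+1)$ matrix $[\,\nabla\bar q_1\mid\cdots\mid\nabla\bar q_k\mid \btheta-\mathbf{a}\,]$ has rank at most $k$. Imposing the vanishing of the appropriate maximal minors together with the $k$ original equations cuts out a zero-dimensional set described by $\bar p$ Pfaffian equations in the $\bar p$ variables $\btheta$.

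Next I would control the format of this augmented square system and apply the core counting bound. Each partial derivative of a Pfaffian function of degree $\bar\beta$ built on a chain of degree $\bar\alpha$ is again Pfaffian on the \emph{same} chain of length $\bar\ell$, with polynomial degree raised by roughly $\bar\alpha+\bar\beta-1$; forming the minors of the Jacobian multiplies these degrees in a controlled way. Tracking this bookkeeping is exactly what produces the exponents in the statement: the factor $(\bar\alpha+2\bar\beta-1)^{\bar p-1}$ records the polynomial-degree contribution across $\bar p-1$ of the variables, while $((2\bar p-1)(\bar\alpha+\bar\beta)-2\bar p+2)^{\bar\ell}$ records the contribution of the $\bar\ell$ chain functions. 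The number of nondegenerate zeros of the resulting square Pfaffian system is then bounded by induction on $\bar\ell$: the base case $\bar\ell=0$ is ordinary B\'ezout, and the inductive step invokes the Khovanskii--Rolle theorem (between two consecutive zeros of a Pfaffian function along a curve lies a zero of its derivative) to trade the last chain function for a system on a chain of length $\bar\ell-1$, at the cost of a factor that, accumulated over $\bar\ell$ steps, yields the prefactor $2^{\bar\ell(\bar\ell-1)/2+1}$. Substituting the tracked format parameters into this square-system bound gives the displayed inequality.

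The hard part will be twofold. The delicate step is the Khovanskii--Rolle induction producing the B\'ezout-type count: each stage must correctly account for the boundary zeros introduced by differentiation and track the degree growth precisely, so that the accumulated estimate matches the stated closed form \emph{exactly} rather than merely up to an unspecified constant. The second difficulty is the non-compactness in the Morse-theoretic reduction --- guaranteeing that every unbounded component of $V$ is represented by a critical point of the chosen test function --- which I would handle via the properness of the squared-distance function together with Khovanskii's finiteness theorem, while verifying throughout that the auxiliary minor equations remain Pfaffian with the claimed format $(\bar\alpha,\bar\beta,\bar\ell)$. Once both are in place, the bound follows by assembling the three tracked factors.
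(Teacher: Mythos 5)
You should first be clear about what the paper itself does with this statement: it offers \emph{no} proof of Theorem~\ref{th:gabrielov}. The result is quoted verbatim from \cite{gabrielov2004complexity} and used as a black box (together with Theorem~\ref{th:karpinsky97}) inside the proofs of Theorems~\ref{th:main_general}, \ref{th:main} and \ref{th:vcdim_colors}, so there is no internal argument to compare yours against; what you have written is a reconstruction of the proof in the cited literature. At the level of strategy your reconstruction is the right one: detecting every connected component by a critical point of a generic test function, passing to a square Pfaffian system, and bounding its nondegenerate zeros by a B\'ezout-type estimate proved by Khovanskii--Rolle induction on the chain length (which is indeed the source of the factor $2^{\bar{\ell}(\bar{\ell}-1)/2+1}$ and of the $(\cdot)^{\bar{\ell}}$ term) is exactly the skeleton of Khovanskii's method on which \cite{gabrielov2004complexity}, and downstream \cite{karpinski1997polynomial}, rely.

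However, as a proof your plan has a genuine gap, located precisely at the step you assert rather than carry out: the format bookkeeping of your auxiliary system cannot produce the stated bound. Observe that the displayed bound is \emph{independent of the number of equations} $k$. Your reduction imposes the vanishing of the maximal minors of the $\bar{p}\times(k+1)$ matrix $[\nabla\bar{q}_1\mid\cdots\mid\nabla\bar{q}_k\mid\btheta-\mathbf{a}]$; since each entry of $\nabla\bar{q}_i$ has degree $\bar{\alpha}+\bar{\beta}-1$ on the common chain, such a minor has degree of order $k(\bar{\alpha}+\bar{\beta}-1)+1$, which grows with $k$ (a Lagrange-multiplier variant fares no better, inflating the ambient dimension to $\bar{p}+k$). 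The $k$-independent base $(\bar{\alpha}+2\bar{\beta}-1)^{\bar{p}-1}$ is the signature of a different reduction, which your outline omits: one first collapses the whole system into the single function $Q=\bar{q}_1^{\,2}+\cdots+\bar{q}_k^{\,2}$, a Pfaffian function of degree $2\bar{\beta}$ on the same chain, whose partial derivatives have degree $\bar{\alpha}+2\bar{\beta}-1$; the square system then consists of $Q=\epsilon$ together with $\bar{p}-1$ equations built from linear combinations of the $\partial_j Q$, and Khovanskii's bound applied to it yields the displayed expression. The same sum-of-squares device is also what repairs your perturbation step: Sard's theorem gives smooth regular-level sets, but it does not by itself guarantee that every connected component of the possibly singular, possibly unbounded variety $V$ is represented by a distinct component of the perturbed set; in \cite{gabrielov2004complexity} this transfer is made through the sublevel sets $\{Q\le\epsilon\}$ with a separation and limiting argument (and an intersection with large balls for unbounded components). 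With the sum-of-squares reduction substituted for your minor equations, your outline becomes the standard proof; without it, the bookkeeping you defer to ``the hard part'' does not close, and the argument does not reach the inequality as stated.
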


\subsection{Proof of Theorem \ref{th:main_general}}\label{proof:main_general}
First, we prove  Theorem \ref{th:main_general}. The proofs of Theorems \ref{th:main} and \ref{th:vcdim_colors} will adopt the same argumentative scheme. As already
mentioned, we will follow the reasoning 
in \cite{karpinski1997polynomial, scarselli2018vapnik}, which consists of three steps.
First, it is shown that
the computation of  GNNs  can be represented by a set of  equations defined by Pfaffian functions. Then, using the format of such Pfaffian functions, Theorem~\ref{th:gabrielov} allows to derive a bound on the number of connected components of the space defined by the equations.
Finally, Theorem~\ref{th:karpinsky97} provides a bound on the VC dimension of GNNs.

\vspace{10pt}

\noindent
Let us define a set of equations $\tau_i(\mathbf{y}, \btheta)=0$
that specifies the computation of the generic GNN model of Eq.~(\ref{def:gnn_upd_g}).  
Here, $\btheta$ collects the GNN parameters,
while $\mathbf{y}$ contains all the variables necessary to define the GNN calculation,
that is, some variables involved in the representation of the GNN input,
i.e., the input graph, and other variables used for the representation of the
internal features of the GNN.

More precisely, let us assume that $\mathsf{COMBINE}^{(1)}$ has $p_{\mathsf{comb}^{(1)}}$ parameters, $\mathsf{COMBINE}^{(t)}$ has $p_{\mathsf{comb}}$ parameters for $2\leq t \leq L$ ,  $\mathsf{AGGREGATE}^{(1)}$ has $p_{\mathsf{agg}^{(1)}}$ parameters, $\mathsf{AGGREGATE}^{(t)}$ has $p_{\mathsf{agg}}$ parameters for $2\leq t \leq L$, and $\mathsf{READOUT}$ has $p_{\mathsf{read}}$ parameters.

Then, the dimension of $\btheta$ is $ p_{\mathsf{comb}^{(1)}} + p_{\mathsf{agg}^{(1)}} + (L-1)(p_{\mathsf{comb}} + p_{\mathsf{agg}}) + p_{\mathsf{read}} $.
Moreover, for a given graph $\mathbf{G}=(G,\mathbf{L})$  in $\mathcal{G}$,
 $\mathbf{y}$ contains some vectorial representation
of $\mathbf{G}$, namely the $Nq$  graph attributes in $\mathbf{L}_G$, and a vectorial representation of the adjacency matrix $\mathbf{A}$, which requires $N(N-1)/2$ elements.  
Besides, to define the equations, we use the same trick as in \cite{karpinski1997polynomial} and 
introduce new variables for each computation unit of the  network. These variables belong to
the input $\mathbf{y}$ of $\tau$.
Formally, we consider a vector of $d$ variables $\mathbf{h}_{v}^{(k)}$ for each node $v$ and for each layer $k$. Note that, as we may be interested in defining
multiple GNN computations on multiple graphs at the same time, here $v$ implicitly addresses a specific node of some graph in the domain. Finally, a variable $\mathsf{READOUT}$ for each graph contains
just a single output of the GNN. Thus, in total, the dimension of $\mathbf{y}$ is 
$Nq+N(N-1)/2+ NdL+1$. 

Therefore, the computation of the GNN model in~(\ref{def:gnn_upd_g}) is  defined by the following set of $LNd+Nq+1$ equations,

\begin{equation}\label{eq:tau1gen}
	\mathbf{h}_{v}^{(0)}-\mathbf{L}_{v}= 0, \end{equation}
 \begin{equation}
	\label{eq:tau2gen}
	\mathbf{h}_{v}^{(t+1)}- \mathsf{COMBINE}^{(t+1)}\big(  \mathbf{h}_v^{(t)}, \mathsf{AGGREGATE}^{(t+1)} (\lms \mathbf{h}_u^{(t)} | u \in \mathsf{ne}(v) \rms, \mathbf{A})   \big)= 0,
 \end{equation}
 \begin{equation}
 \label{eq:tau3gen}
\mathsf{\overline{READOUT}}- \mathsf{READOUT}(\lms \mathbf{h}_v^{(L)}:v\in V \rms ) = 0,
\end{equation}
where $\mathbf{A}$ is the variable storing the adjacency matrix of the input graph. We can assume that $\mathbf{A}$ is valid for any finite graph.

The following lemma specifies the format of the Pfaffian functions involved in Eqs.~(\ref{eq:tau1gen})--(\ref{eq:tau3gen}).
\begin{lemma}\label{general:main_lemma}
Let $\mathsf{COMBINE}^{(t)}$, $\mathsf{AGGREGATE}^{(t)}$ and $\mathsf{READOUT}$ be Pfaffian functions with format, respectively, $(\alpha_{\mathsf{comb}}, \beta_{\mathsf{comb}}, \ell_{\mathsf{comb}})$, $(\alpha_{\mathsf{agg}}, \beta_{\mathsf{agg}}, \, \ell_{\mathsf{agg}})$, $(\alpha_{\mathsf{read}}, \beta_{\mathsf{read}}, \ell_{\mathsf{read}})$
w.r.t. the variables $\mathbf{y}$ and $\btheta$ described above, then:
\begin{enumerate}
\item the left part of Eq.~(\ref{eq:tau1gen}) is a polynomial of degree $1$;
\item the left part of Eq.~(\ref{eq:tau2gen}) is a Pfaffian function with format  \\
$(\alpha_{\mathsf{agg}} + \beta_{\mathsf{agg}} -1 + \alpha_{\mathsf{comb}} \beta_{\mathsf{agg}} , \beta_{\mathsf{comb}}, \ell_{\mathsf{comb}} + \ell_{\mathsf{agg}} )$;
\item the left part of Eq.~(\ref{eq:tau3gen}) is a Pfaffian function with format \\
$(\alpha_{\mathsf{read}}, \beta_{\mathsf{read}}, \ell_{\mathsf{read}})$;

\item Eqs.~(\ref{eq:tau1gen})--(\ref{eq:tau3gen}) constitute a system of Pfaffian equations with a maximal  format 
$(\alpha_{\mathsf{system}},\beta_{\mathsf{system}},\ell _{\mathsf{system}})$, 
where $\alpha_{\mathsf{system}} = \max \{ \alpha_{\mathsf{agg}} + \beta_{\mathsf{agg}} -1+\alpha_{\mathsf{comb}} \beta_{\mathsf{agg}},  \alpha_{\mathsf{read}}\}$, $\beta_{\mathsf{system}} = \max \{ \beta_{\mathsf{comb}}, \beta_{\mathsf{read}}\}$ and
$\ell_{\mathsf{system}}=LNd(\ell_{\mathsf{comb}} + \ell_{\mathsf{agg}} )+ \ell_{\mathsf{read}}$.
\end{enumerate}

\end{lemma}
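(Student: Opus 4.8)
The plan is to treat the lemma as an exercise in the \emph{Pfaffian calculus}: the elementary rules describing how the format $(\alpha,\beta,\ell)$ behaves under differentiation, multiplication, addition and composition \cite{gabrielov2004complexity,khovanskiui1991fewnomials}. I would first record the two rules that do all the work. (i) If $f$ has format $(\alpha,\beta,\ell)$ then each partial derivative $\partial f/\partial x_i$ is again Pfaffian on the \emph{same} chain, with format $(\alpha,\alpha+\beta-1,\ell)$, because differentiating the defining polynomial lowers its degree by one while the chain relations inject polynomials of degree $\alpha$. (ii) If two Pfaffian functions share a chain, their sum and product keep that chain, with outer degrees $\max\{\beta_1,\beta_2\}$ and $\beta_1+\beta_2$ respectively. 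With these in hand, items 1 and 3 are immediate: the left-hand side of \eqref{eq:tau1gen} is $\mathbf{h}^{(0)}_v-\mathbf{L}_v$, an affine (degree-$1$) polynomial with empty chain; and the left-hand side of \eqref{eq:tau3gen} is the fresh output variable $\overline{\mathsf{READOUT}}$ minus the Pfaffian function $\mathsf{READOUT}$, so its chain is exactly that of $\mathsf{READOUT}$ and, since subtracting a degree-$1$ variable cannot raise the outer degree past $\beta_{\mathsf{read}}\ge 1$, the format is $(\alpha_{\mathsf{read}},\beta_{\mathsf{read}},\ell_{\mathsf{read}})$.

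The core is item 2, the composition in \eqref{eq:tau2gen}. I would write $\mathsf{AGGREGATE}$ as $A=Q(\mathbf{z},g_1(\mathbf{z}),\dots,g_{\ell_{\mathsf{agg}}}(\mathbf{z}))$ with $\deg Q\le\beta_{\mathsf{agg}}$ on a chain $\{g_m\}$ of degree $\alpha_{\mathsf{agg}}$, and $\mathsf{COMBINE}$ as $P(\mathbf{h},a,f_1(\mathbf{h},a),\dots,f_{\ell_{\mathsf{comb}}}(\mathbf{h},a))$ with $\deg P\le\beta_{\mathsf{comb}}$ on a chain $\{f_k\}$ of degree $\alpha_{\mathsf{comb}}$. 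The composite chain is the concatenation of $\{g_m\}$ with the pulled-back functions $\widetilde f_k(\mathbf{h},\mathbf{z}):=f_k(\mathbf{h},A)$, which has the claimed length $\ell_{\mathsf{comb}}+\ell_{\mathsf{agg}}$. To obtain the first format entry I differentiate by the chain rule, $\partial\widetilde f_k/\partial z_i=(\partial_a f_k)|_{a=A}\,\partial A/\partial z_i$ and $\partial\widetilde f_k/\partial h_m=(\partial_{h_m}f_k)|_{a=A}$. By rule (i), $\partial A/\partial z_i$ is a polynomial in $(\mathbf{z},\mathbf{g})$ of degree $\alpha_{\mathsf{agg}}+\beta_{\mathsf{agg}}-1$; and substituting the degree-$\beta_{\mathsf{agg}}$ expression $A=Q$ into $\partial_a f_k$, whose degree in the $a$-slot is $\alpha_{\mathsf{comb}}$, contributes a factor $\alpha_{\mathsf{comb}}\beta_{\mathsf{agg}}$. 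The product therefore has degree at most $\alpha_{\mathsf{agg}}+\beta_{\mathsf{agg}}-1+\alpha_{\mathsf{comb}}\beta_{\mathsf{agg}}$, matching the first entry of the claimed format and dominating both the $\mathbf{h}$-derivatives and the degree $\alpha_{\mathsf{agg}}$ of the relations for $\{g_m\}$. The outer polynomial is $P$, of degree $\beta_{\mathsf{comb}}$, giving the full format $(\alpha_{\mathsf{agg}}+\beta_{\mathsf{agg}}-1+\alpha_{\mathsf{comb}}\beta_{\mathsf{agg}},\,\beta_{\mathsf{comb}},\,\ell_{\mathsf{comb}}+\ell_{\mathsf{agg}})$.

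For item 4 I would assemble the whole system \eqref{eq:tau1gen}--\eqref{eq:tau3gen} over a common Pfaffian chain formed by concatenating the individual chains. Each of the $LNd$ scalar equations of type \eqref{eq:tau2gen} (one per layer $t$, node $v$ and feature component) carries a copy of a length-$(\ell_{\mathsf{comb}}+\ell_{\mathsf{agg}})$ chain, and the single readout equation carries a length-$\ell_{\mathsf{read}}$ chain, so the total length is $\ell_{\mathsf{system}}=LNd(\ell_{\mathsf{comb}}+\ell_{\mathsf{agg}})+\ell_{\mathsf{read}}$; the polynomial equations \eqref{eq:tau1gen} add nothing to the chain. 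Since every function is Pfaffian over this shared chain, the maximal degrees are the entrywise maxima over the three equation types, yielding $\alpha_{\mathsf{system}}=\max\{\alpha_{\mathsf{agg}}+\beta_{\mathsf{agg}}-1+\alpha_{\mathsf{comb}}\beta_{\mathsf{agg}},\alpha_{\mathsf{read}}\}$ and $\beta_{\mathsf{system}}=\max\{\beta_{\mathsf{comb}},\beta_{\mathsf{read}}\}$ (the degree-$1$ contribution of \eqref{eq:tau1gen} being absorbed), which is the stated system format and feeds directly into Theorem \ref{th:gabrielov}.

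The delicate point — and where I expect the bookkeeping to be most error-prone — is the composition in item 2: one must decide cleanly what enters the \emph{chain} versus the \emph{outer polynomial}, since substituting the degree-$\beta_{\mathsf{agg}}$ aggregate $A$ into $\mathsf{COMBINE}$ threatens to inflate \emph{both} the chain degree and the outer degree simultaneously. The accounting above keeps the outer degree at $\beta_{\mathsf{comb}}$ by letting the pulled-back functions $\widetilde f_k$ absorb the $\alpha_{\mathsf{comb}}\beta_{\mathsf{agg}}$ growth inside their differential relations rather than in $P$; verifying that this is legitimate — i.e.\ that $\{g_m,\widetilde f_k\}$ genuinely forms a Pfaffian chain in the correct order, with each differential polynomial in the \emph{preceding} chain functions only — is the crux, and is exactly the step I would write out in full detail. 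I would also note that in the concrete architectures of interest the aggregation is affine ($\beta_{\mathsf{agg}}=1$), which trivializes this interaction and leaves the downstream bounds unaffected.
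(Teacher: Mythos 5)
Your proof follows essentially the same route as the paper's: points 1 and 3 are dismissed as immediate, point 2 is handled by the composition rule for Pfaffian formats (which the paper simply cites from Gabrielov--Vorobjov, $f\circ g$ having format $(\alpha_g+\beta_g-1+\alpha_f\beta_g,\,\beta_f,\,\ell_f+\ell_g)$, whereas you re-derive its degree bookkeeping via the chain rule), and point 4 follows by concatenating the chains and taking entrywise maxima of the degrees. The accounting is correct and consistent with the paper; your explicit flagging of the chain-versus-outer-polynomial subtlety in the composition step is a point the paper leaves implicit in the cited lemma, but it does not change the argument.
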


\begin{proof} The first point is straightforwardly evident, while the third is true by definition. The second point can be derived by applying 
the composition lemma for Pfaffian functions \cite{gabrielov1995complexity}, 
 according to which, if two functions
$f$ and $g$ have format $(\alpha_f,\beta_f,\ell_f)$ and $(\alpha_g,\beta_g,\ell_g)$, respectively,
then their composition $f \circ g$ has format $(\alpha_g+\beta_g-1+\alpha_f \beta_g,\beta_f ,\ell_f+\ell_g)$. Finally, the fourth point is obtained by taking the maximum of the format of the involved Pfaffian equations also observing that the common chain is the  concatenation of the chains.
\end{proof}

Now we can proceed with the proof of Theorem \ref{th:main_general}.
\begin{proof}
Let $\mathbf{T}$ be defined as in Eq.~(\ref{eq:TPaff}), where
$\tau_i (\mathbf{y}, \btheta)=0$ are the equations in (\ref{eq:tau1gen}), (\ref{eq:tau2gen}), (\ref{eq:tau3gen}). Combining Theorem~\ref{th:gabrielov} with the formats
provided by point 3. of Lemma~\ref{general:main_lemma}, for any input graph and any value of the variables
$\mathbf{y}$, the number of connected components of $\mathbf{T}^{-1}$  satisfies
\begin{equation}\label{eq:conn_comp_gnn}
  B\leq  2^{\frac{\bar{\ell}(\bar{\ell}-1)}{2}+1} (\Bar{\alpha} + 2 \Bar{\beta} -1)^{\Bar{p}-1}((2 \Bar{p}-1)(\Bar{\alpha} + \Bar{\beta})- 2\Bar{p}+2 )^{\Bar{\ell}},
\end{equation}
where 
$\Bar{p} = p_{\mathsf{comb}^{(1)}} + p_{\mathsf{agg}^{(1)}} + (L-1)(p_{\mathsf{comb}} + p_{\mathsf{agg}}) + p_{\mathsf{read}} $, $\bar{\alpha} = \alpha_{\mathsf{system}}$, $\bar{\beta} = \beta_{\mathsf{system}}$, $\bar{\ell}=\Bar{p}(LNd(\ell_{\mathsf{comb}} + \ell_{\mathsf{agg}} )+ \ell_{\mathsf{read}})$. 

By Theorem \ref{th:karpinsky97}, the VC dimension of the GNN described by Eqs. (\ref{eq:tau1gen})--(\ref{eq:tau3gen}) is bounded by 
\begin{equation}\label{eq:VCdim_B}
  \mathsf{VCdim}(\mathsf{GNN}) \leq 2 \log B + \Bar{p}(16 +2 \log \bar{s} )\,,
\end{equation}
where 
$\bar{s}=LNd+Nq+1$. Thus, substituting Eq. \eqref{eq:conn_comp_gnn} in Eq. \eqref{eq:VCdim_B}, we have:
{\small \begin{align*}
	\mathsf{VCdim}\bigl(\mathsf{GNN}\bigr) &\leq  2\log B + \Bar{p}(16 +2 \log \bar{s} ) \\
  &\leq   2\log \left(  2^{\frac{\bar{\ell}(\bar{\ell}-1)}{2}+1}(\Bar{\alpha} + 2 \Bar{\beta} -1)^{\Bar{p}-1} ((2 \Bar{p}-1)(\Bar{\alpha} + \Bar{\beta})- 2\Bar{p}+2 )^{\Bar{\ell}}\right) + \\
   &\,\,\,\,\; +\Bar{p}(16 +2 \log \bar{s} )\\
  &= \bar{\ell}(\bar{\ell}\!-\!1)\!+\!  
  2({\Bar{p}\!-\!1} )\log \left(\Bar{\alpha} \!+\!2 \Bar{\beta}\!-\!1\right)\!+\!2 \Bar{\ell}\log\left((2 \Bar{p}\!-\!1)(\Bar{\alpha}\!+\!\Bar{\beta})\!-\!2\Bar{p}\!+\!2 \right) + \\
  &\,\,\,\,\;+\Bar{p}(16 +2 \log \bar{s} )+2,
  \end{align*}}
obtaining Eq. \eqref{VCGNN}. \\
If we denote $H = LNd((\ell_{\mathsf{comb}} + \ell_{\mathsf{agg}} )+ \ell_{\mathsf{read}})$, we have:
\begin{eqnarray}\label{eq:final_bound}
    \mathsf{VCdim}\bigl(\mathsf{GNN}\bigr) &\leq  &  \Bar{p }H(\Bar{p}H -1)+ 2({\Bar{p}-1} )\log (\alpha_{\mathsf{system}} + 2\beta_{\mathsf{system}}-1) \nonumber \\
    && + 2\bar{p}H\log \bigl((2\bar{p}-1)(\alpha_{\mathsf{system}} + \beta_{\mathsf{system}})- 2\bar{p} +2\bigr) \nonumber \\
    &&+\Bar{p}(16 +2 \log(\Bar{s}) ) + 2\\
    & \leq & \Bar{p}^2H^2+ 2{\Bar{p}} \log \left(3\gamma \right) \nonumber \\
    && +2\bar{p}H\log \left((4\gamma -2)\Bar{p}+2-2\gamma\right) \nonumber \\
    && +\Bar{p}(16 +2 \log(\Bar{s}) ) + 2. \nonumber
\end{eqnarray}

  Then, by replacing $\Bar{p}$, $H$ and $\Bar{s}$, and setting $\gamma = \max \{ \bar{\alpha}, \bar{\beta} \}$, it follows that:
 {\footnotesize \begin{align*}
	\mathsf{VCdim}\bigl(\mathsf{GNN}\bigr) &\leq   ( p_{\mathsf{comb}^{(1)}}\!+ \!p_{\mathsf{agg}^{(1)}}\!+\!(L\!-\!1)(p_{\mathsf{comb}}\!+\!p_{\mathsf{agg}})\!+\!p_{\mathsf{read}})^2(LNd(\ell_{\mathsf{comb}}\!+\!\ell_{\mathsf{agg}} )\!+ \!\ell_{\mathsf{read}})^2 \\
   &+ 2( p_{\mathsf{comb}^{(1)}} + p_{\mathsf{agg}^{(1)}} + (L-1)(p_{\mathsf{comb}} + p_{\mathsf{agg}}) + p_{\mathsf{read}} ) \log \left(3\gamma \right) \\
  &+ 2( p_{\mathsf{comb}^{(1)}} + p_{\mathsf{agg}^{(1)}} + (L-1)(p_{\mathsf{comb}} + p_{\mathsf{agg}}) + p_{\mathsf{read}} )\cdot \\
  & \cdot \log\left((4\gamma -2)(p_{\mathsf{comb}^{(1)}} + p_{\mathsf{agg}^{(1)}} + (L-1)(p_{\mathsf{comb}} + p_{\mathsf{agg}}) + p_{\mathsf{read}} )+2-2\gamma\right) \\
  &+( p_{\mathsf{comb}^{(1)}}\!+\! p_{\mathsf{agg}^{(1)}}\!+ (L\!-\!1)(p_{\mathsf{comb}}\!+\!p_{\mathsf{agg}})\!+ \!p_{\mathsf{read}} )(16\!+\!2 \log(LNd\!+\!Nq\!+\!1 ))\,,
  \end{align*}}
  which leads to Eq. \eqref{VCGNN_ext} as in the thesis.
\end{proof}

\subsection{Proof of Theorem \ref{th:order_Pfaff_general}}

The orders of growth of the VC dimension w.r.t. $\bar{p},N,L,d,q$ are straighforwardly derived by inspecting Eq.~\ref{VCGNN_ext} in Theorem \ref{th:main_general}.

\subsection{Proof of Theorem \ref{th:main}}

As already mentioned in \ref{proof:main_general}, the proof of Theorem \ref{th:main} follows the same scheme used in proof of Theorem \ref{th:main_general}. We just recall that we consider a GNN defined by the following updating equation:
\begin{equation}\label{gnnsimple}
    \mathbf{h}_v^{(t+1)} = \bsigma ( \mathbf{W}_{\mathsf{comb}}^{(t+1)}\mathbf{h}_v^{(t)} + \mathbf{W}_{\mathsf{agg}}^{(t+1)}\mathbf{h}_{\mathsf{ne}(v)}^{(t+1)} +\mb^{(t+1)}),
\end{equation}
where $\bsigma$ is the activation function and 
\begin{equation}
    \mathbf{h}_{\mathsf{ne}(v)}^{(t+1)} = \sum\limits_{u\in \text{ne}(v)}\mathbf{h}_u^{(t)}.
\end{equation}
The hidden states are initialised as $\mathbf{h}_{v}^{(0)}=\mathbf{L}_{v}$. 
It is easily seen that the total number of parameters is  $p=(2d+1)(d(L-1)+q+1) - q$. Indeed, the parameters are:
\begin{itemize}
    \item $\mathbf{W}_{\mathsf{comb}}^{(1)}, \mathbf{W}_{\mathsf{agg}}^{(1)} \in \mathbb{R}^{d \times q}, \mb^{(1)} \in \mathbb{R}^{d \times 1 }$, so that we have $2dq+d$ parameters;
    \item $\mathbf{W}_{\mathsf{comb}}^{(t)}, \mathbf{W}_{\mathsf{agg}}^{(t)} \in \mathbb{R}^{d \times d}, \mb^{(t)} \in \mathbb{R}^{d\times 1}$ for $t=2, \dots, L$,  so that we have $(2d^2+d)(L-1)$ parameters;
    \item $\mathbf{w} \in \mathbb{R}^{1 \times d}, b \in \mathbb{R}$, so that we have $d+1$  parameters.
\end{itemize}
Summing up, we will have $2dq+d + (2d^2+d)(L-1) + d+1 = (2d+1)(d(L-1)+q+1) - q$ parameters.
By Eq. (\ref{gnnsimple}),  the computation of the GNN is straightforwardly  defined by the following set of $LNd+Nq+1$ equations: 
\begin{eqnarray}\label{eq:tau1}
	\mathbf{h}_{v}^{(0)}-\mathbf{L}_{v}&=&0,\\
	\label{eq:tau2}
	\mathbf{h}_{v}^{(t+1)}-\bsigma\Big(\mathbf{W}_{\mathsf{comb}}^{(t+1)}\mathbf{h}_{v}^{(t)} +
	\sum_{u}\mathbf{W}_{\mathsf{agg}}^{(t+1)} \mathbf{h}_{u}^{(t)} m_{v,u} + \mb^{(+1t)}
	\Big)&=&0,\\
\mathsf{READOUT}-\bsigma\Big( \sum_{v\in V}\mathbf{w} \mathbf{h}_{v}^{(L)}+b\ \Big)&=&0,
	\label{eq:tau3}
\end{eqnarray}
where $m_{v,u}$ is a binary value, which is $1$ when $v$ and $u$ are connected and $0$, otherwise.

Now we state the analogous of Lemma \ref{general:main_lemma} to retrieve the format of Pfaffian functions involved in the above equations. 

\begin{lemma}\label{simple:main_lemma}
Let $\bsigma$ be a Pfaffian function in $\mathbf{x}$ with format $(\alpha_\sigma,\beta_\sigma,\ell_\sigma)$, then
w.r.t. the variables $\mathbf{y}$ and $\mathbf{w}$ described above,
\begin{enumerate}
\item the left part of Eq.~(\ref{eq:tau1}) is a polynomial of degree $1$;
\item the left part of Eq.~(\ref{eq:tau2}) is a Pfaffian function having format $(2+3\alpha_\sigma,\beta_\sigma,\ell_\sigma)$;
\item the left part of Eq.~(\ref{eq:tau3}) is a Pfaffian function having format $(1+2\alpha_\sigma,\beta_\sigma,\ell_\sigma)$;
\item  Eqs.~(\ref{eq:tau1})--(\ref{eq:tau3}) constitute a system of Pfaffian equations with format 
$(2+3\alpha_\sigma,\beta_\sigma, H \ell_\sigma)$,
where the shared chain is obtained by concatenating the 
chains of $H=LN d+1$ equations in ~(\ref{eq:tau2}),(\ref{eq:tau3}),
including an activation function.
\end{enumerate}
\end{lemma}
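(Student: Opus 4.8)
The plan is to prove the four points by reducing every equation in the system~\eqref{eq:tau1}--\eqref{eq:tau3} to a composition of the scalar activation $\bsigma$ with an explicit polynomial argument, and then to invoke the composition lemma for Pfaffian functions \cite{gabrielov1995complexity} exactly as was done in the proof of Lemma~\ref{general:main_lemma}. The only bookkeeping required is the degree of each polynomial argument, together with the convention that a polynomial of degree $D$, having an empty Pfaffian chain, is a Pfaffian function of format $(0,D,0)$; one checks that this convention makes the composition formula reproduce the direct differentiation of $\bsigma$ composed with a polynomial. Point~1 is then immediate, since the left-hand side of Eq.~\eqref{eq:tau1} is $\mathbf{h}_v^{(0)}-\mathbf{L}_v$, which is affine in the variables and hence a polynomial of degree $1$.

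For point~3 I would observe that the argument of $\bsigma$ in the readout equation~\eqref{eq:tau3}, namely $\sum_{v\in V}\mathbf{w}\,\mathbf{h}_v^{(L)}+b$, is a polynomial of degree $2$ in the variables $\mathbf{y},\btheta$: each summand $\mathbf{w}\,\mathbf{h}_v^{(L)}$ is a product of a weight entry and a feature entry, and $b$ is linear. Applying the composition lemma to $f=\bsigma$ with format $(\alpha_\sigma,\beta_\sigma,\ell_\sigma)$ and $g$ the degree-$2$ argument of format $(0,2,0)$ gives $(0+2-1+2\alpha_\sigma,\beta_\sigma,\ell_\sigma)=(1+2\alpha_\sigma,\beta_\sigma,\ell_\sigma)$, as claimed.

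Point~2 is the analogous computation for the update equation~\eqref{eq:tau2}, and here the crucial observation is that the argument of $\bsigma$ has degree $3$, not $2$. Indeed, while $\mathbf{W}_{\mathsf{comb}}^{(t+1)}\mathbf{h}_v^{(t)}$ is degree $2$ and $\mb^{(t+1)}$ is linear, the aggregation term $\sum_u \mathbf{W}_{\mathsf{agg}}^{(t+1)}\mathbf{h}_u^{(t)} m_{v,u}$ is a product of a weight entry, a feature entry, and an adjacency entry $m_{v,u}$, all of which are treated as variables; hence this term has degree $3$. Feeding $g$ of format $(0,3,0)$ into the composition lemma against $\bsigma$ yields $(0+3-1+3\alpha_\sigma,\beta_\sigma,\ell_\sigma)=(2+3\alpha_\sigma,\beta_\sigma,\ell_\sigma)$.

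Finally, for point~4 I would assemble the whole system exactly as in part~4 of Lemma~\ref{general:main_lemma}: the overall degrees are the maxima of the per-equation degrees, so $\alpha_{\mathsf{system}}=\max\{1,\,2+3\alpha_\sigma,\,1+2\alpha_\sigma\}=2+3\alpha_\sigma$ and $\beta_{\mathsf{system}}=\max\{1,\beta_\sigma\}=\beta_\sigma$, while the common chain is the concatenation of the individual chains. Since each of the $LNd$ scalar equations of type~\eqref{eq:tau2} and the single readout equation~\eqref{eq:tau3} contributes its own chain of length $\ell_\sigma$ (the chain functions of $\bsigma$ composed with that equation's specific polynomial argument), whereas the initialization equations~\eqref{eq:tau1} contribute no chain, the total length is $(LNd+1)\ell_\sigma=H\ell_\sigma$. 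The step I expect to require the most care is verifying that this concatenation is a legitimate Pfaffian chain: the composed chain functions coming from distinct equations must be ordered in blocks so that the derivative of each depends polynomially only on $\mathbf{x}$ and on functions appearing earlier in the concatenation, which holds because the blocks are mutually independent and each block is internally triangular.
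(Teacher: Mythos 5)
Your proposal is correct and follows essentially the same route as the paper's own proof: both reduce each equation to $\bsigma$ composed with an explicit polynomial argument of degree $3$ (update, due to the weight--feature--adjacency product) or $2$ (readout), treat these polynomials as Pfaffian functions of format $(0,3,0)$ and $(0,2,0)$, apply the composition lemma of \cite{gabrielov1995complexity}, and obtain the system format by taking maxima of degrees and concatenating the $H=LNd+1$ chains of length $\ell_\sigma$. Your closing remark on why the concatenated chains form a legitimate block-triangular Pfaffian chain is a detail the paper leaves implicit, but it does not change the argument.
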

\begin{proof} The first point is straightforwardly evident. The second and third points can be derived by applying 
the composition lemma for Pfaffian functions.
Actually, the formula inside $\bsigma$ in Eq.~(\ref{eq:tau2}) is a polynomial of degree 3, 
due to the factors $\mathbf{h}_{u}^{(t-1)}\mathbf{W}_{\mathsf{agg}}^{(t)}m_{v,u}$, while the formula inside $\bsigma$ in Eq.~(\ref{eq:tau3}) is a polynomial of degree 2, due to the factors $\mathbf{h}_{v}^{(L)}\mathbf{W}$.
Moreover, polynomials are Pfaffian functions with null chains, $\alpha$ equals $0$ and $\beta$ equals their degrees. Thus, the functions inside $\bsigma$ in 
Eqs.~(\ref{eq:tau2}) and ~(\ref{eq:tau3}) have format $(0,3,0)$ and $(0,2,0)$, respectively.
Then, the thesis follows by the composition lemma \cite{gabrielov1995complexity}, 
 according to which if two functions
$f$ and $g$ have format $(\alpha_f,\beta_f,\ell_f)$ and $(\alpha_g,\beta_g,\ell_g)$, respectively,
then their composition $f \circ g$ has format $(\alpha_g+\beta_g-1+\alpha_f \beta_g,\beta_f ,\ell_f+\ell_g)$. Finally, the fourth point is a consequence of the fact that the equations are independent and 
the chains can be concatenated. The length of the chain derives directly from the existence of $H=LNd +1$  equations using $\bsigma$. The degree is obtained copying the largest 
degree of a Pfaffian function, which is the one in Eq. (\ref{eq:tau2}).
\end{proof}

As in \ref{proof:main_general}, it is enough to combine Lemma \ref{simple:main_lemma}, Theorem \ref{th:gabrielov} and Theorem \ref{th:karpinsky97} to obtain the bounds stated in the thesis. The bounds on the VC dimension of the specific GNN with $\mathsf{logsig}$ as activation function is easily found since the format of $\mathsf{logsig}$ is (2,1,1):

 {\footnotesize \begin{align*}	\mathsf{VCdim}\bigl(\mathsf{GNN}\bigr) &\leq   ((2d+1)(d(L-1)+q+1) - q)^2(LN d+1)^2 \\
   &+ 2((2d+1)(d(L-1)+q+1) - q )\log \left(9\right) \\
  &+ 2((2d+1)(d(L-1)+q+1) - q)\log\left(16((2d+1)(d(L-1)+q+1) - q)\right) \\
  &+((2d+1)(d(L-1)+q+1) - q)(16 +2 \log(LNd+Nq+1 ))\,.
  \end{align*}}
\qed

Let us call Basic GNN (BGNN) the model
of Eqs.~\eqref{eq:morrisGNN}--\eqref{eq:morrisREADOUT}.  The proof 
is based on the introduction of an extended version,
 which we call EGNN, that  can simulate the BGNN. Due to this capability,
the EGNN can shatter any set that is shattered by the BGNN so that
its  VC dimension is greater or equal to the VC dimension of the BGNN.
The proof will follow by bounding the VC dimension of the former model. \\
More precisely, the EGNN exploits the same aggregation mechanism of the BGNN
to compute the features, which is described by Eq.~\eqref{eq:morrisGNN}.
On the other hand, the  $\mathsf{READOUT}$ function is defined as
\begin{equation}\label{eq:Extended_READOUT}
    \mathsf{READOUT}\Big( \lms \mathbf{h}_v^{(L)} \; | \; v \in V  \rms \Big):=  f \Big( \sum_{v\in V} \mw \mathbf{h}_{v}^{(L)} c_v + b \Big),
\end{equation}
where $c_v$ are additional real inputs used to weight each node feature in the $\mathsf{READOUT}$ function.
The simulation is based on the following steps.
\begin{itemize}
\item[1)] Each input graph $G$ of the BGNN is transformed to another graph $G^\prime$, where all the nodes having the same 1--WL color  are merged into a single node and the edges are merged consequently;
\item[2)] The EGNN is applied to $G^\prime$ and each $c_v$ is set equal to the number of nodes that have been merged to obtain node $v$.
\end{itemize}
Note that a GNN cannot distinguish nodes with the same color as the computation is the same on all these nodes. Thus, the BGNN and the EGNN produce the same features on  nodes sharing color. As a consequence,
also the $\mathsf{READOUTs}$ of the two models have the same output, when the $c_v$ are  equal to the number of nodes within each color cluster. \\
Given these assumptions, the number of equations describing the Pfaffian variety associated to the EGNN is reduced to $s_c=C_1d+C_0q+1$, which can be used in place of $\Bar{s}$ in Theorem \ref{th:karpinsky97}.  Moreover, also the chains of the Pfaffian functions in merged  equations can be merged and we have that $H$ can be replaced by $H_c =C_1 d+1$.
Finally, the length of the chain $\ell$ of Theorem~\ref{th:gabrielov}
is replaced by $\bar{\ell}_c=\Bar{p}H_c\ell_\sigma$.

With the above changes, we can replace the variables in Eq.~(\ref{eq:final_bound}) as in \ref{proof:main_general}, obtaining:
\begin{eqnarray*}
\mathsf{VCdim}\bigl(\mathsf{GNN}\bigr) &\leq  &  \Bar{p }H_c(\Bar{p}H_c -1)+   
  2\Bar{p} \log \left(9\right) \\
  &&+ 2\bar{p}H_c\log\left(16\Bar{p}-7\right) \\
  &&+\Bar{p}(16 +2 \log(\Bar{s}_c) ) +2 \\
 &\leq  &  \Bar{p }^2(C_1d+1)^2+   
  2\Bar{p} \log \left(9\right) \\
  &&+ 2\bar{p}(C_1d+1)\log\left(16\Bar{p}-7\right) \\
  &&+\Bar{p}(16 +2 \log(C_1d+C_0q+1) ).
  \end{eqnarray*}
and the thesis holds. \qed

\section{Experiments on other datasets} 
In this appendix, we report the additional results on the experiment E1, regarding the evolution of the difference between the training and the test set, for GNNs with activation function $f \in \{\mathsf{arctan}, \mathsf{tanh}\}$, over a dataset $\mathcal{D} \in \{  \text{\textbf{PROTEINS}},\text{\textbf{PTC-MR}} \}.$ 
Each figure shows the evolution of $\mathsf{diff}$ through the epochs, for certain values of $\mathsf{hd}$, keeping fixed $l=3$, and for certain values of $l$, keeping fixed $\mathsf{hd}=32$; for each figure, the picture on the left shows how $\mathsf{diff}$ evolves as the hidden size increases, while the picture on the right shows how $\mathsf{diff}$ evolves as the hidden size, or the number of layers, increases.\label{Appendix:experiments}

The experiments reported here confirm the same conclusion drawn in Section \ref{sec:experiments}. 


\begin{figure}[ht!]\label{fig:atan_proteins}
    \begin{subfigure}{0.49\linewidth}
        \includegraphics[width=\textwidth]{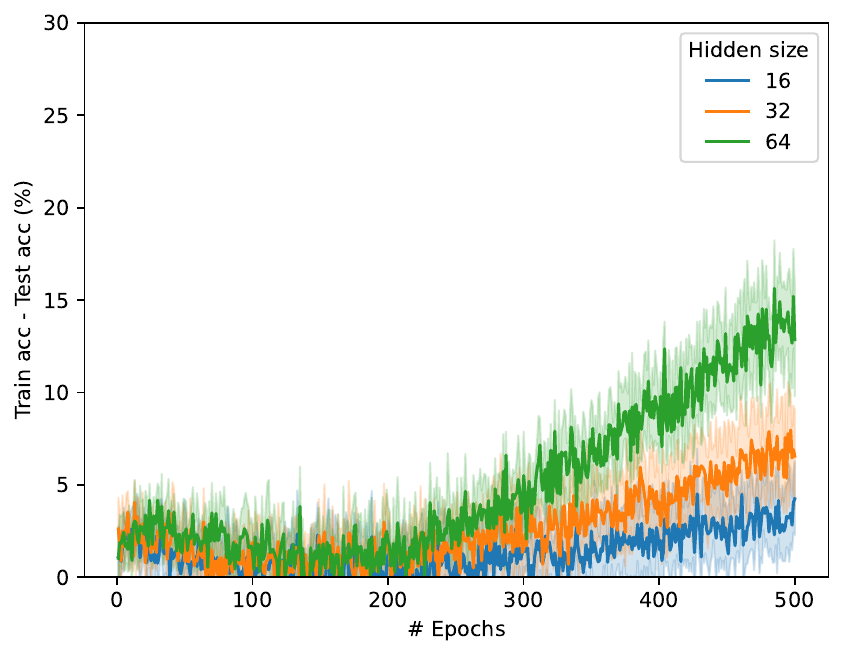}
    \end{subfigure}
    \begin{subfigure}{0.49\linewidth}
        \includegraphics[width=\textwidth]{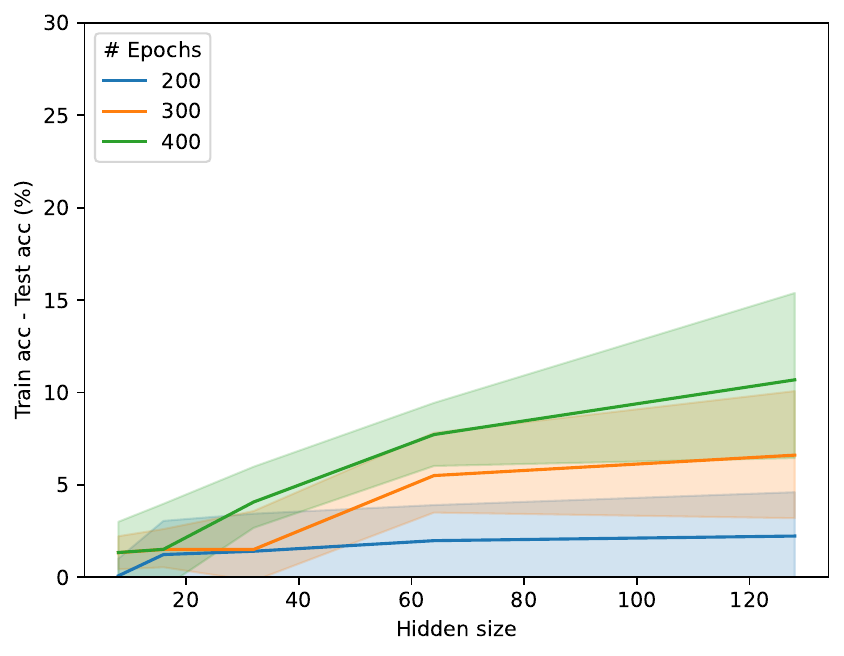}
    \end{subfigure}
    \begin{subfigure}{0.49\linewidth}
        \includegraphics[width=\textwidth]{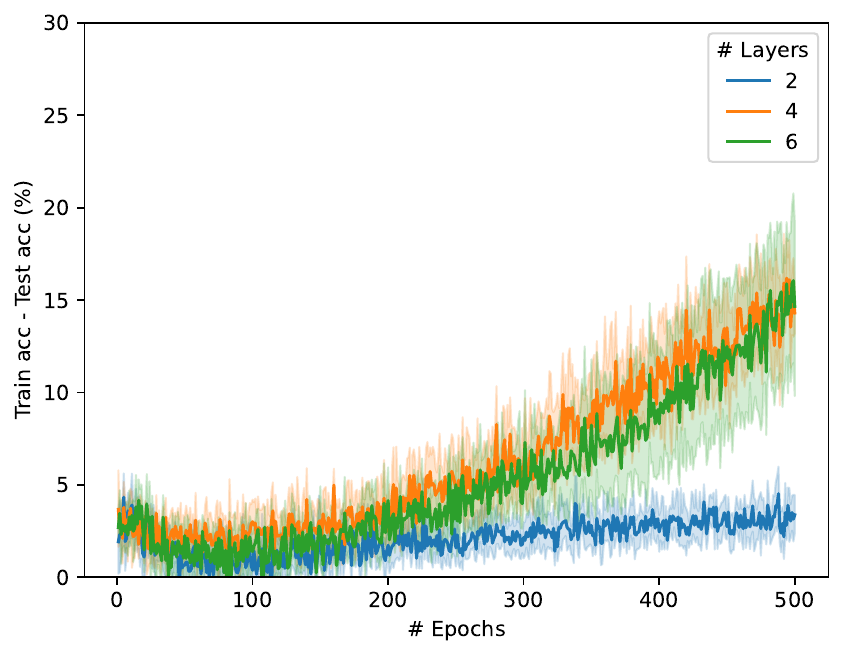}
    \end{subfigure}
    \begin{subfigure}{0.49\linewidth}
        \includegraphics[width=\textwidth]{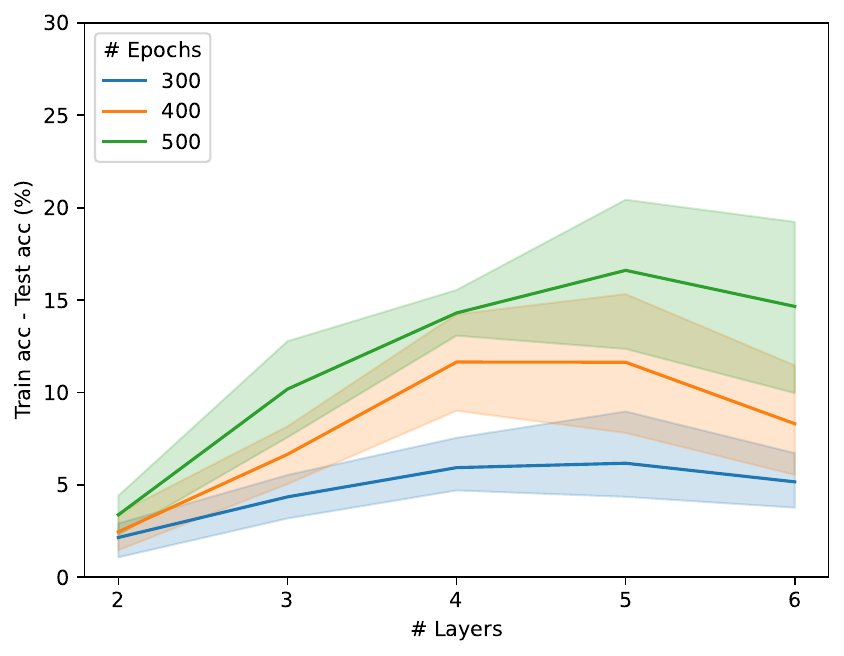}
    \end{subfigure}\caption{Results on the task \textbf{E1} for GNNs with activation function $\mathsf{atan}$ over the dataset \textbf{PROTEINS}.}
\end{figure}

\begin{figure}[ht!]\label{fig:atan_ptc}
    \begin{subfigure}{0.49\linewidth}
        \includegraphics[width=\textwidth]{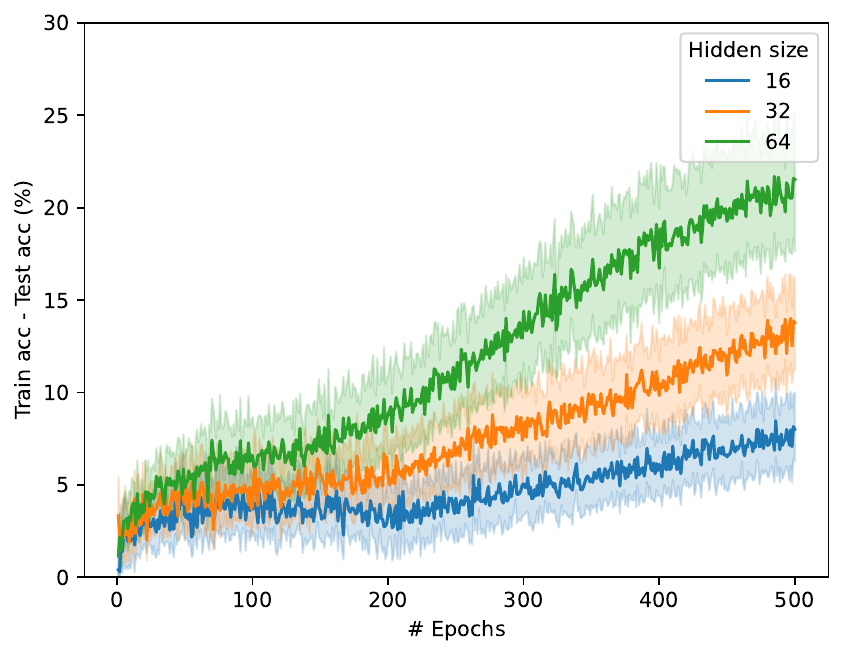}
    \end{subfigure}
    \begin{subfigure}{0.49\linewidth}
        \includegraphics[width=\textwidth]{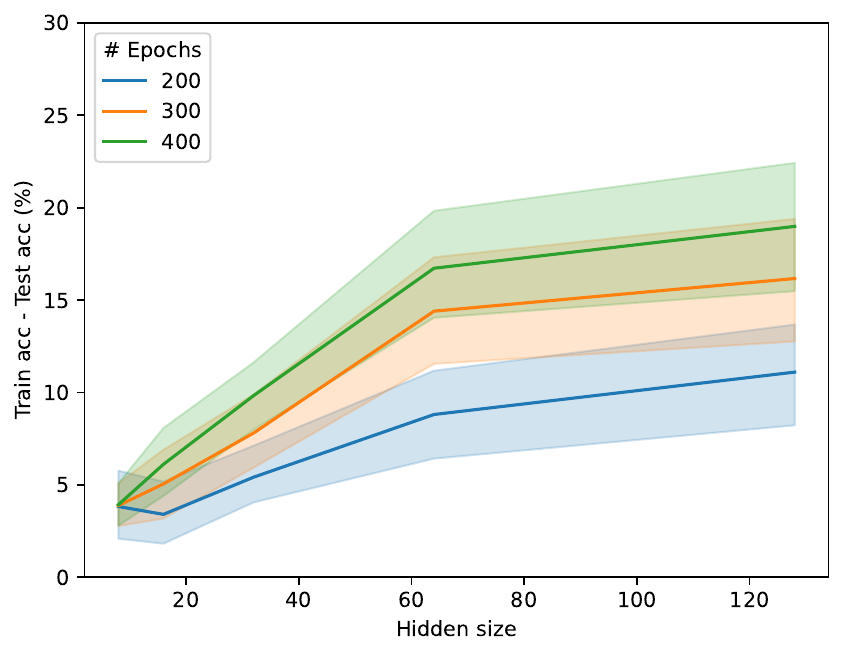}
    \end{subfigure}
    \begin{subfigure}{0.49\linewidth}
        \includegraphics[width=\textwidth]{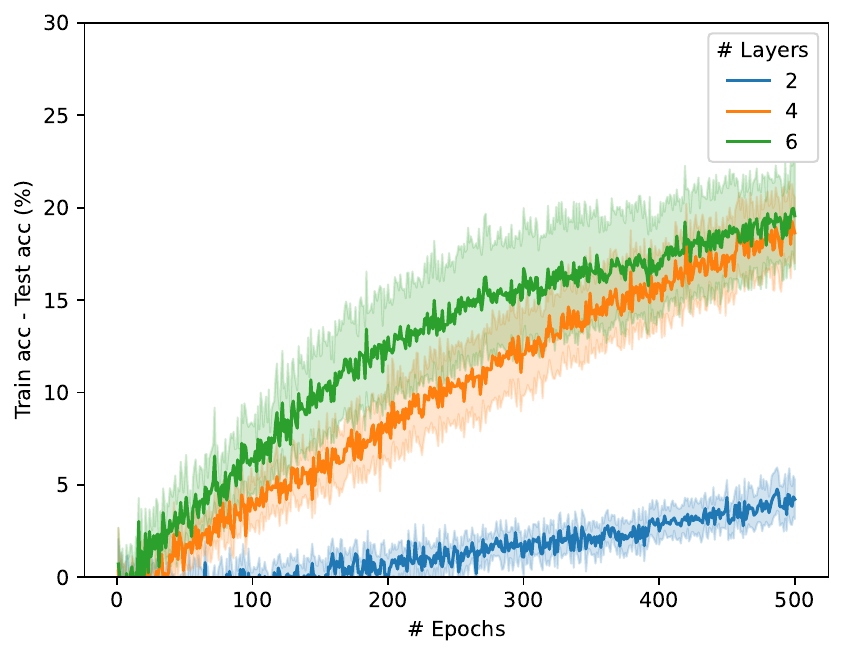}
    \end{subfigure}
    \begin{subfigure}{0.49\linewidth}
        \includegraphics[width=\textwidth]{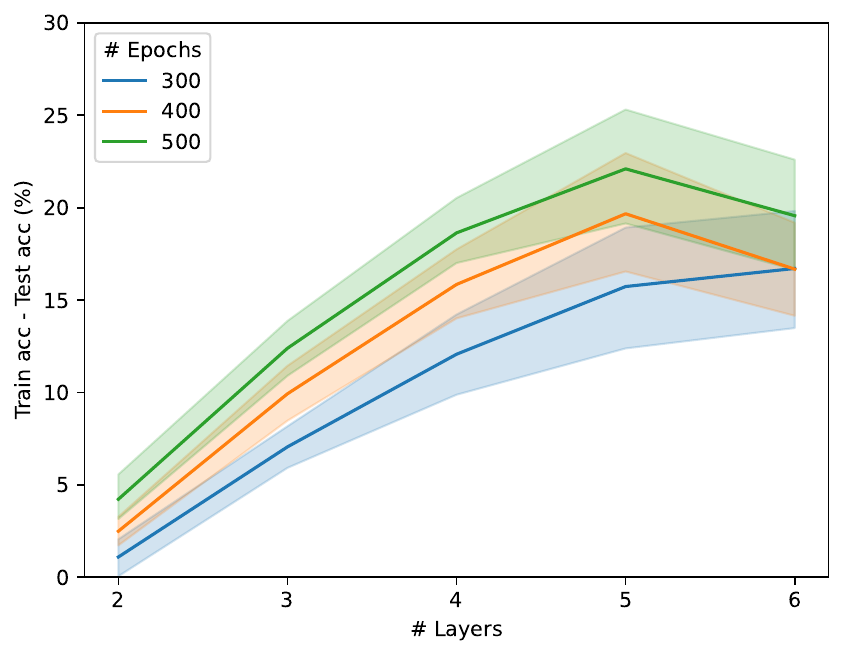}
    \end{subfigure}\caption{Results on the task \textbf{E1} for GNNs with activation function $\mathsf{atan}$ over the dataset \textbf{PTC-MR}.}
\end{figure}


\begin{figure}[ht!]\label{fig:tanh_proteins}
    \begin{subfigure}{0.49\linewidth}
        \includegraphics[width=\textwidth]{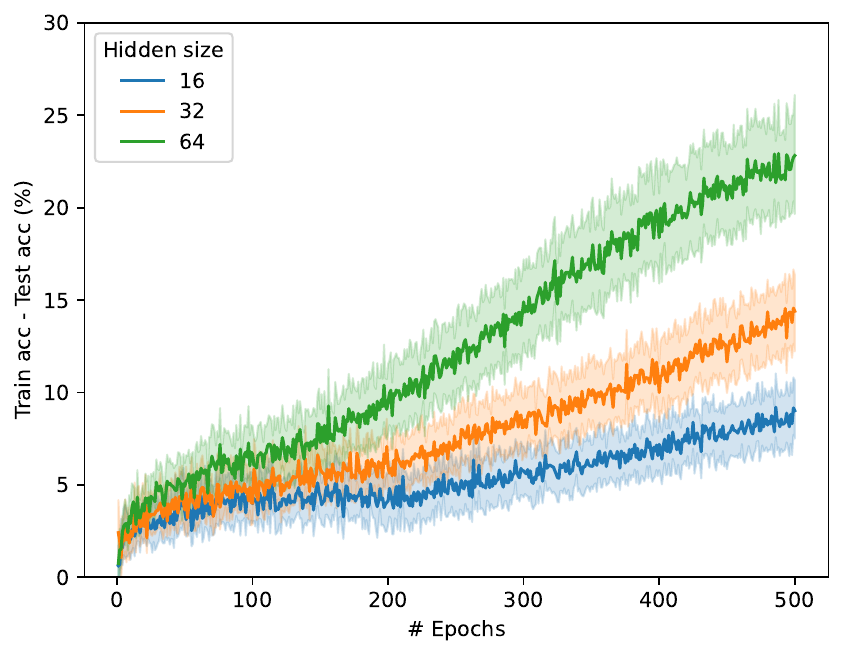}
    \end{subfigure}
    \begin{subfigure}{0.49\linewidth}
        \includegraphics[width=\textwidth]{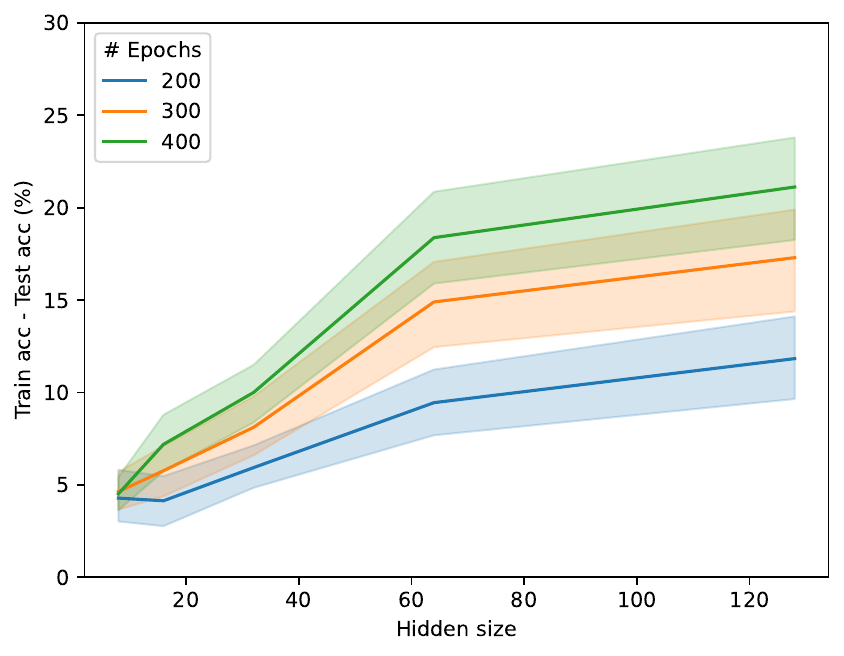}
    \end{subfigure}
    \begin{subfigure}{0.49\linewidth}
        \includegraphics[width=\textwidth]{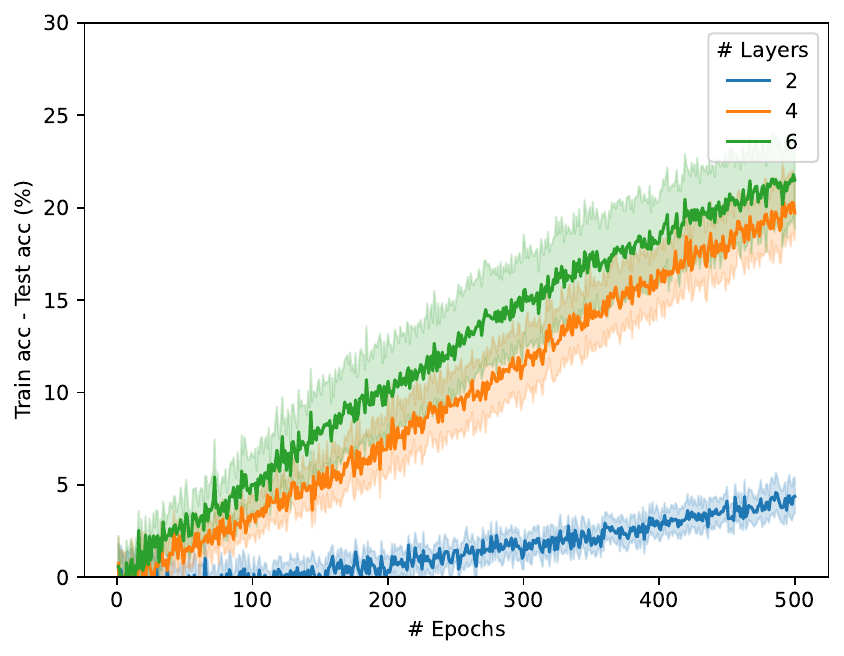}
    \end{subfigure}
    \begin{subfigure}{0.49\linewidth}
        \includegraphics[width=\textwidth]{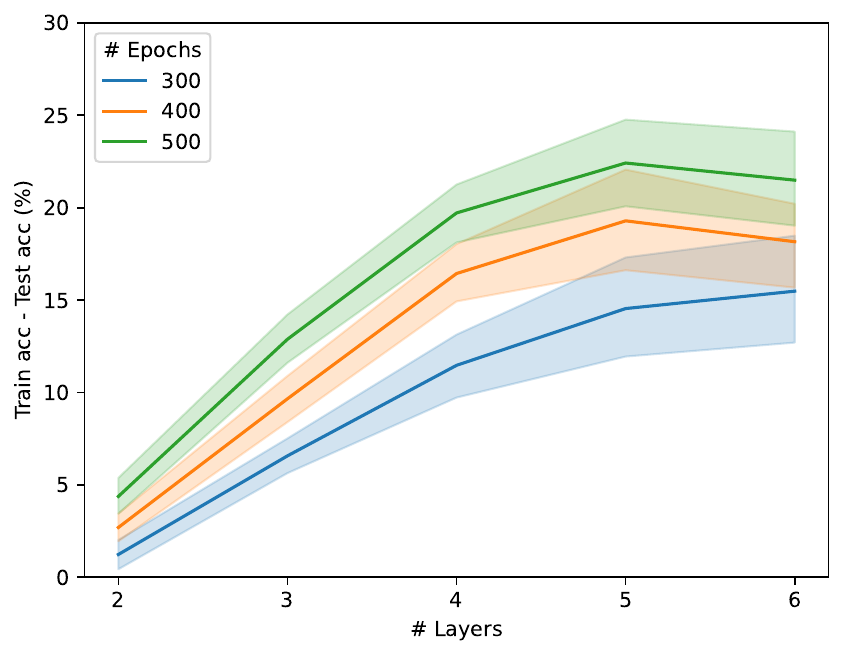}
    \end{subfigure}\caption{Results on the task \textbf{E1} for GNNs with activation function $\mathsf{tanh}$ over the dataset \textbf{PROTEINS}.}
\end{figure}

\begin{figure}[ht!]\label{fig:tanh_ptc_hidden}
    \begin{subfigure}{0.49\linewidth}
        \includegraphics[width=\textwidth]{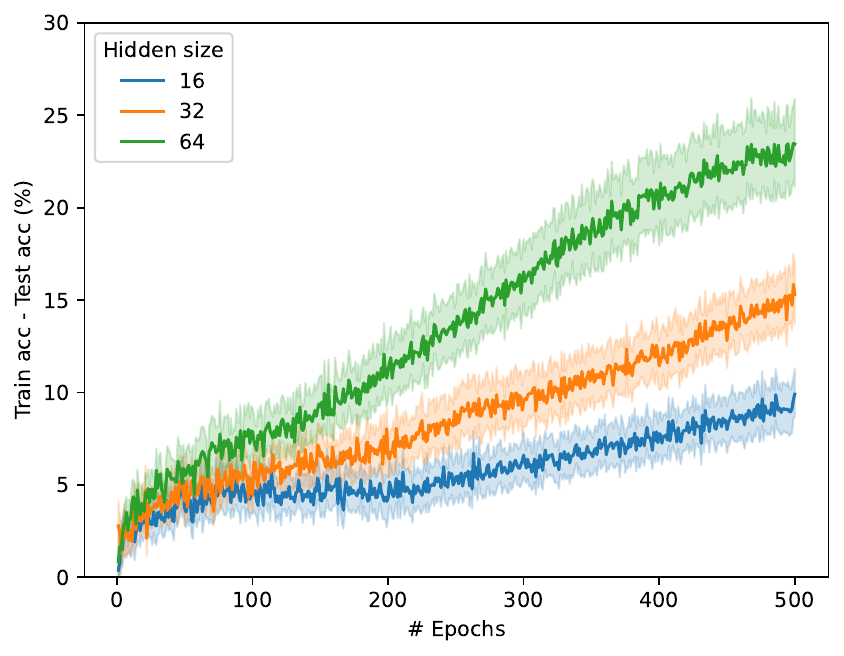}
    \end{subfigure}
    \begin{subfigure}{0.49\linewidth}
        \includegraphics[width=\textwidth]{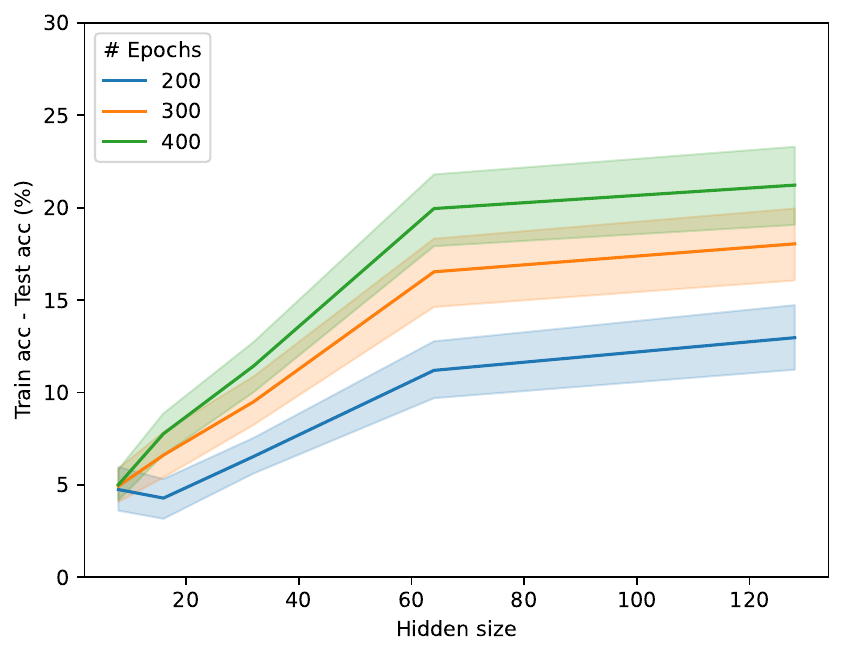}
    \end{subfigure}
    \begin{subfigure}{0.49\linewidth}
        \includegraphics[width=\textwidth]{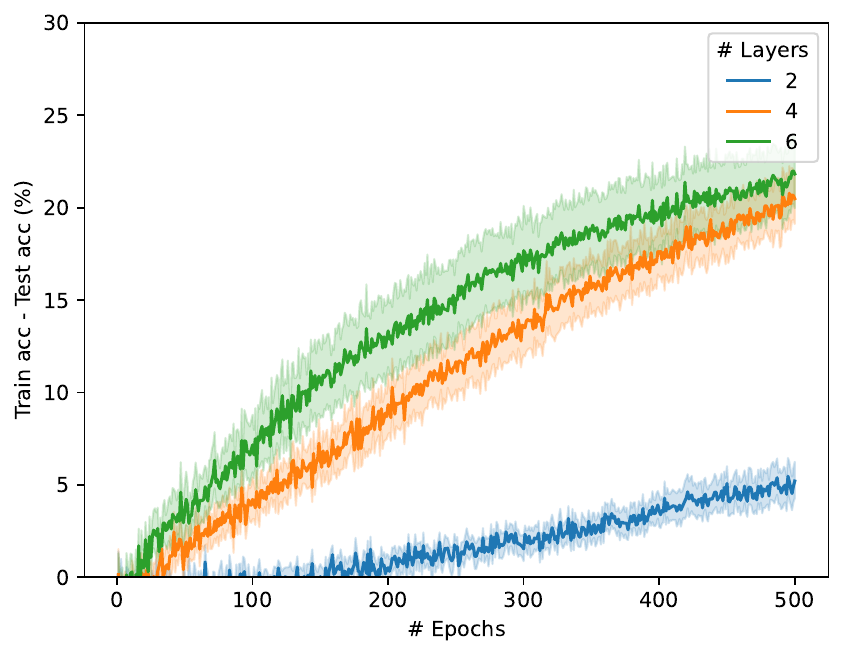}
    \end{subfigure}
    \begin{subfigure}{0.49\linewidth}
        \includegraphics[width=\textwidth]{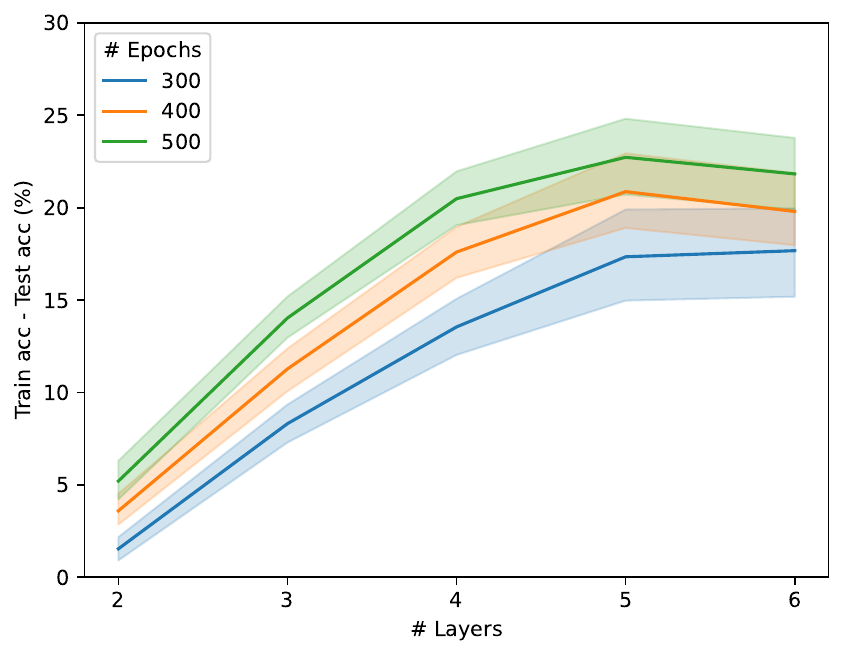}
    \end{subfigure}\caption{Results on the task \textbf{E1} for GNNs with activation function $\mathsf{tanh}$ over the dataset \textbf{PTC-MR}.}
\end{figure}

\end{document}